\documentclass[twoside]{article}

%\usepackage{aistats2026}
% If your paper is accepted, change the options for the package
% aistats2026 as follows:
%
\usepackage[accepted]{aistats2026}
%
% This option will print headings for the title of your paper and
% headings for the authors names, plus a copyright note at the end of
% the first column of the first page.

% We also include a `preprint' option for non-anonymous preprints. 
% Change the options for the package aistats2026 as follows:
%
%\usepackage[preprint]{aistats2026}
%
% This option will print headings for the title of your paper and
% headings for the authors names, but does not print the copyright and 
% venue note at the end of the first column of the first page.

% If you set papersize explicitly, activate the following three lines:
%\special{papersize = 8.5in, 11in}
%\setlength{\pdfpageheight}{11in}
%\setlength{\pdfpagewidth}{8.5in}

% If you use the natbib package, activate the following three lines:
\usepackage[round]{natbib}
% \renewcommand{\bibname}{References}
% \renewcommand{\bibsection}{\subsubsection*{\bibname}}

% My definitions:

\def\hpp{\widehat{\pi}'}

\usepackage{graphicx} 
\usepackage{amsmath}
\usepackage{amsthm}
\usepackage{algorithmic}
\usepackage[ruled,vlined,linesnumbered]{algorithm2e}
\usepackage{xcolor}
\usepackage{multicol} 
\usepackage{enumitem}

\SetCommentSty{mycommfont}

\usepackage{color}
\usepackage{subcaption}
\usepackage{booktabs}
\usepackage{pifont}
\usepackage[table]{xcolor}
\usepackage{bbm}
\usepackage{multirow}
\usepackage{url}

\newtheorem{theorem}{Theorem}
\newtheorem{lemma}{Lemma}
\newtheorem{remark}{Remark}
% Alternative Assumption!

% New environment

% If you use BibTeX in apalike style, activate the following line:
\bibliographystyle{apalike}

\begin{document}

% If your paper is accepted and the title of your paper is very long,
% the style will print as headings an error message. Use the following
% command to supply a shorter title of your paper so that it can be
% used as headings.
%
%\runningtitle{I use this title instead because the last one was very long}

% If your paper is accepted and the number of authors is large, the
% style will print as headings an error message. Use the following
% command to supply a shorter version of the author names so that
% they can be used as headings (for example, use only the surnames)
%
%\runningauthor{Surname 1, Surname 2, Surname 3, ...., Surname n}

\twocolumn[

\aistatstitle{Prior Shift Estimation for Positive Unlabeled Data Through the Lens of Kernel Embedding}

\aistatsauthor{Jan Mielniczuk \textsuperscript{1,3} \And Wojciech Rejchel \textsuperscript{2} \And   Paweł~Teisseyre\textsuperscript{1,3} }

% \aistatsaddress{ Institution 1 \And  Institution 2 \And Institution 3 } ]

% \aistatsauthor{Paweł~Teisseyre\textsuperscript{1} \And Timo Martens\textsuperscript{2} \And  Jessa Bekker\textsuperscript{2} \And Jesse Davis\textsuperscript{2} }

\aistatsaddress{
\textsuperscript{1}Polish Academy of Sciences, \\Warsaw, Poland
\And 
\textsuperscript{2}Nicolaus Copernicus University,\\ Toruń, Poland
\And 
\textsuperscript{3}Warsaw University of Technology,\\ Warsaw, Poland} 
]

\begin{abstract}
We study estimation  of a class prior  for unlabeled target samples which  possibly differs from that of source population. Moreover, it is assumed that the source data is partially observable: 
only samples from the positive class and from the whole population are available (PU learning scenario). We introduce a novel direct estimator of the class prior which avoids estimation of posterior probabilities in both populations and has a simple geometric interpretation. It is based on a distribution matching technique together with kernel embedding in Reproducing Kernel Hilbert  Space and is obtained  as an explicit solution to an optimisation task. We establish its asymptotic consistency as well as an explicit non-asymptotic    bound  on its deviation from the unknown prior, which is calculable in practice. We study  finite sample behaviour for synthetic and real data and show that the proposal works consistently on par or better than its competitors.
\end{abstract}

\section{INTRODUCTION}

Positive Unlabeled (PU) learning \citep{ElkanNoto2008,BekkerDavis2020} is an active research topic that has attracted a lot of interest in the machine learning community in recent years due to a common occurrence of data which is only partially observable. The goal is to build a binary classification model based on training data that contains solely positive cases and unlabeled ones, which can be either positive  or negative. Typical  example is a scenario, when patients with a confirmed diagnosis of a disease are  treated as positive cases, while patients with no diagnosis are considered as unlabeled observations, since this group may include both sick and healthy individuals. PU data occurs frequently in
many fields, such as bioinformatics \citep{Li2021}, image and text classification \citep{LiLiu2003,Fung2006}, and survey research \citep{Sechidis2017}.

Most state-of-the-art learning algorithms for PU data, such as nnPU \citep{Kiryo2017}, VAE-PU \citep{Na2020} and others \citep{SelfPU,Zhao2022,Luo2021_PULNS} require knowledge of the class prior, i.e. the probability of the positive class. Knowledge of the class prior can be used to either obtain calculable representation of  a risk function, or to modify  a threshold  of  a classification rule learned on source data. Since the class prior is usually unknown, there is an important line of research aimed at developing methods for estimating it from PU data, see \cite{Jainetal2016,Ramaswamy2016,BekkerAAAI18} for representative examples. The task is non-trivial, because in the case of PU data we do not have direct access to negative observations, but only to an unlabeled sample which is a mixture of positive and negative observations. This implies that the prior is not identifiable in general and one needs to impose assumptions to ensure its uniqueness. 
Moreover, most of the existing methods assume that the class prior is the same for  the source (training) data and the target (test) data.
This assumption is not fulfilled in many situations. For example, imagine that the source data is collected in the period before the outbreak of an epidemic, where the percentage of people with the considered disease is small, while the target data is gathered during an epidemic, where the prevalence of the disease may be much higher \citep{Rolandaetal2022}. The source and target data may also be collected in different climatic zones, which naturally differ in the prevalence of diseases.
In such situations, it is necessary to estimate the class prior probability not only for the source PU data but also for new unlabeled target data, for which we only observe features whereas the labels remain unknown.
Figure \ref{fig_motivation} illustrates the discussed situation. In the example, the source class prior $\pi=0.2$ whereas the target class prior is $\pi'= 0.8$.

\begin{figure}[ht!]
\centering
    \begin{tabular}{c}
      \includegraphics[width=0.4\textwidth]{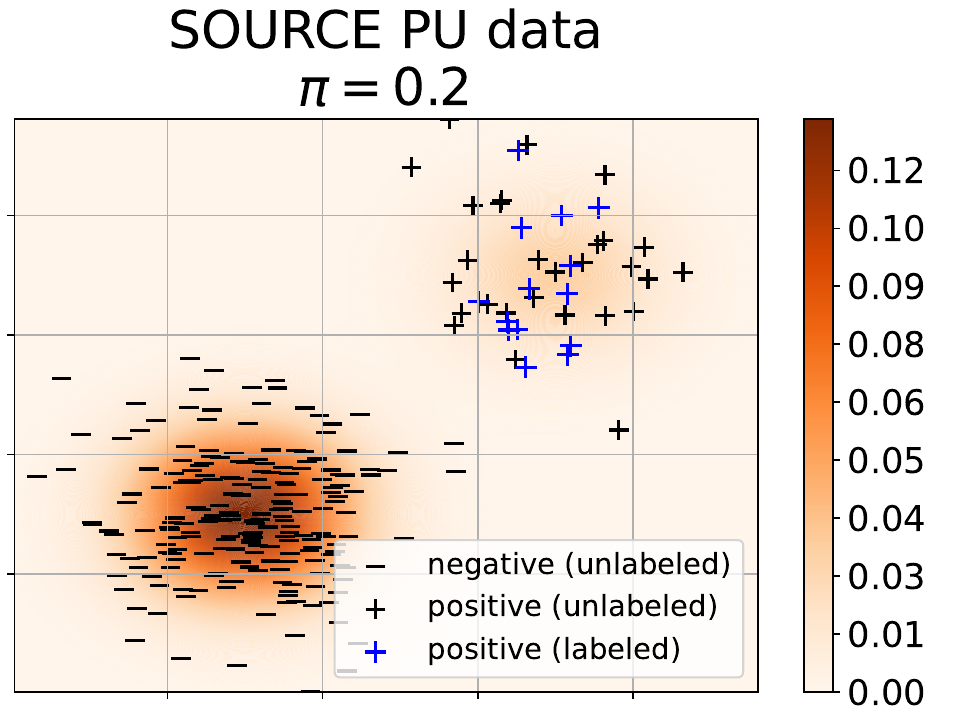} \\
      \includegraphics[width=0.4\textwidth]{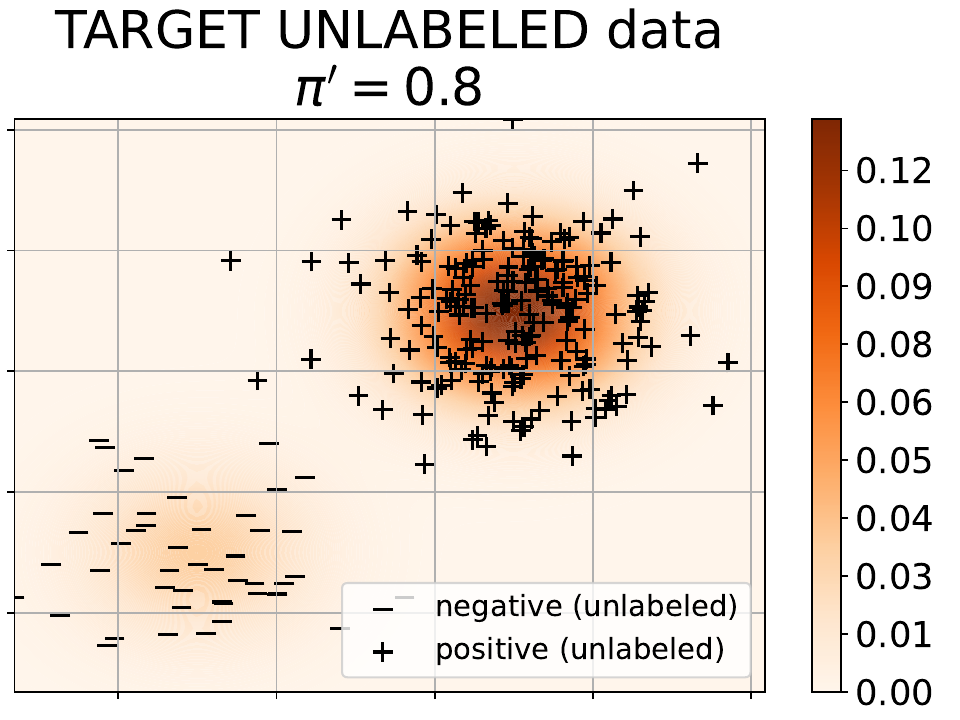} 
      \end{tabular}
    \caption{Label shift visualization for PU data. Source (training) data contains positive (blue) and unlabeled (grey) observations. Target (test) data contains only unlabeled observations. The class priors differ between source data ($\pi=0.2$) and target data ($\pi'=0.8$). The goal is to estimate the target class prior $\pi'$ using source PU data and target unlabeled data.}
    \label{fig_motivation}
\end{figure}

The problem of inference under class prior shift, known in the literature as label shift, has been extensively studied and several methods have been developed to estimate the probability of the class prior for the target data as well as to modify the classifier to take the shift into account \citep{SaerensLatinneDecaestecker2002,LiptonWangSmola2018,Gargetal2020,Iyer2014,Vaz2019}. We note that in business applications evaluation of proportion of each  label on unlabeled data set (known as quantification task) is frequently even more needed than classification itself. This is particularly important for applications tracking trends (see \cite{Forman2008} and \cite{Gonzales2017} for the review). 
However,  methods developed for label shift problem require fully labeled source data and thus cannot be directly applied to the problem considered here.  Whence the important endeavour is to
estimate the  target class prior directly, possibly avoiding label prediction for  source and target samples.
Surprisingly,  despite its obvious importance, this problem has been rarely discussed. The only example up to now is \cite{Nakajima23} where an approach based on estimation of ratio of densities for positive and mixture distribution is proposed. Its  authors, however, focus mainly on  properties of the ensuing classifier, not on the properties of label shift estimator.

The main contribution of this work is the proposal of a new estimator of the target class prior $\pi'$  called \textbf{TCPU} for which label prediction for target population is not necessary. Our approach is based on employing the distributions matching technique and the kernel method. 
The kernel method \citep{Fukimizu2007,Gretton2012} is a powerful approach successfully applied for many problems of machine learning due to  related universality property. For  shifted prior estimation when the data is fully observable it has been used in \cite{Iyer2014}.
Here, we  provide a rigorous treatment of proposed estimator, including its  consistency, i.e. convergence in probability to the true target class prior. Even more importantly, we provide a {\it non-asymptotic} bound on the approximation error.
Additionally, in the paper we show how to adapt the popular KM estimator \citep{Ramaswamy2016}, being a state-of-the-art class prior estimator for PU data, to the  case of  class prior shift.
Our experiments, conducted on artificial and real data for different class prior shift schemes, confirm the effectiveness of  proposed method TCPU.

\section{LABEL SHIFT FOR PU LEARNING}
We first introduce  relevant notation. Let $X\in {\cal X}$ be a random variable corresponding to a feature vector, $Y\in\{-1,1\}$ be a true class indicator and $P_{XY}$  their joint distribution (called a source distribution). We consider a problem of modeling Positive Unlabeled data in case-control setting,  which means that only samples coming from the positive class and samples coming from the overall population  $X$ are available. More formally, let $P_X$ be the distribution of $X$ and $P_+=P_{X|Y=1}$, $P_-=P_{X|Y=-1}$ the distributions of samples from the positive class and the negative class, respectively. We denote by $X_1,\ldots,X_n$ independent samples generated according to $P_X$ and  $X_1^+,\ldots,X_m^+$ generated according to $P_+$.

Moreover, we consider  the second vector $( X',Y')$ such that its distribution $P'_{X'Y'}$ (called a target distribution) is a label shifted distribution of $(X,Y)$, which means that the marginal distribution of $Y'$ is different from that of $Y$, i.e.
\[  \pi'=P'(Y'=1)\neq P(Y=1)=\pi,\]
however, the covariate distributions in both the  positive and the negative class remain the same:
\begin{equation}
\label{LSassumption}
P'_{ X'| Y'=i}= P_{ X|Y=i}\quad {\rm for} \quad i=\pm 1.  
\end{equation}
Thus, we have that a distribution of $X'$ satisfies  $  P'_{ X'}= \pi' P_+ + (1- \pi') P_{-}$ and  is different from $P_X=
\pi P_+ + (1- \pi) P_{-}$.

Denote by 
$ X_1',\ldots,  X_{ n'}' $ independent samples generated from $ P'_{X'}$. In the paper, we consider the problem of estimation of label shifted probability $\pi'$. Note that   the problem is nontrivial as the class indicators corresponding to shifted samples are not available. However, we have at our disposal data  from the positive class and unlabeled $X$ observations corresponding to a different prior $\pi$.

We note that if $\pi$ is known the distribution $P_{-}$ is  uniquely determined and the problem is well
defined in general.  When  $\pi$  is unknown,  the specific assumptions are needed under which it is identifiable, e.g. an assumption  that $P_-$ is {\it not} a convex combination of $P_+$ and other probability measure  (see e.g. \cite{Ramaswamy2016}).

Finally, it is worth mentioning that estimation of $\pi'$ is crucial to define a classification rule for the target set which involves a modified threshold based on $\pi'$ \citep{LiptonWangSmola2018}:
\begin{align*}
&P'(Y'=1|X')>0.5 \iff \\
&P(Y=1|X)>\frac{(1-\pi)\pi'}{(1-\pi')\pi+(1-\pi)\pi'}.
\end{align*}
A natural approach is to train PU classifier, such as nnPU \citep{Kiryo2017},  on the source data, to estimate $P(Y=1|X)$, and then apply the above decision rule.

\section{CLASS PRIOR ESTIMATION FOR TARGET DATA}
\subsection{TCPU: a novel kernel-based estimator}
\label{Sec:TCPU}
In this section we introduce a novel method of estimating $\pi'$,  which will be called {\bf TCPU} (a {\bf T}arget {\bf C}lass prior estimator for {\bf P}ositive-{\bf U}nlabled data under a label shift).   

Let $K(\cdot,\cdot)$  be a kernel function (i.e. a symmetric, continuous and semi-positive function defined on $\cal X\times \cal X$) and let $\cal H$ be a Reproducing Kernel Hilbert Space (RHKS)  induced by $K(\cdot,\cdot)$ (see e.g. \cite{Fukimizu2007}).
We denote an associated  kernel transform by  $\phi(x)=K(x,\cdot)\in {\cal H}.$  A scalar product in $\cal H$  of $\phi(x_1)$ and $\phi(x_2)$ is defined as  $<\phi(x_1), \phi(x_2)>_{\cal H} = K(x_1,x_2)$ for  $x_1,x_2 \in \cal X$ and  is naturally extended for general elements of ${\cal H}$.
Next, let  
$ \Phi(P_X)=\mathbbm{E}\phi(X)=\int_{\cal X} \phi(s)\,P_X(ds)\in {\cal H}$ 
be a  mean functional of $P_X$ with a norm $||\Phi(P_X)||_{\cal H}^2= \mathbbm{E} K(X_1,X_2),$ where $X_1$ and $X_2$ are two independent random vectors following a distribution $P_X.$  The mean functionals $\Phi(P_+)$ and $\Phi(P'_{X'})$ are defined analogously. Recall that when kernel is universal we have that $\Phi(P)=\Phi(Q)$ is equivalent to the fact  that distributions  $P$ and $Q$ coincide \citep{Fukimizu2007}.

Let $\hat{P}_X$,  $\hat P_+$ and 
$\hat{ P'}_{X'}$  be
empirical  distributions corresponding to observable samples. In  the following we will omit indices $X$ and $X'$ in $P_X$ and $P'_{X'}$, respectively, and the same convention is applied to their  empirical counterparts. We note that due to the fact that $\hat P$ is a discrete distribution with mass $n^{-1}$ at each observation $X_i$ we have that $\Phi(\hat P)=n^{-1}\sum_{i=1}^n\phi(X_i)$ and analogous representations hold for $\Phi(\hat P_+)$  and $\Phi(\hat P')$.

In the above setup we have that
\begin{align*}
(1-\pi')(P-\pi P_+) &= (1-\pi')(1-\pi)P_-\\
&=(1-\pi)( P' -\pi' P_+).
\end{align*}
Therefore, in order to determine $\pi'$, it is natural to substitute $\gamma$ for $\pi'$ and minimize the following objective function
% \begin{align}
% \label{Lfunc}
% &\mathcal{L}(\gamma) = \\
% &||(1-\gamma) [\Phi(P)-\pi \Phi(P_+)]- (1-\pi)[\Phi( P') -\gamma \Phi(P_+)]||^2_{\cal H}.  \nonumber
% \end{align}
\begin{equation}
\label{Lfunc}
\mathcal{L}(\gamma) =
||(1-\gamma)A - (1-\pi)B(\gamma)||^2_{\cal H},  
\end{equation}
$A=[\Phi(P)-\pi \Phi(P_+)]$ and $B(\gamma)=[\Phi( P') -\gamma \Phi(P_+)]$.
\begin{lemma}
Suppose that a kernel $K$ is universal, 
$P_+\neq P_-$ and $\pi<1$. Then $\pi'$ is unique minimizer of $\mathcal{L}(\gamma)$.
\end{lemma}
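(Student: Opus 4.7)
The plan is to reduce $\mathcal{L}(\gamma)$ to an explicit quadratic polynomial in $\gamma$ whose leading coefficient can be shown to be strictly positive under the stated hypotheses. Because the mean embedding $\Phi$ is linear in the measure argument, the bracket inside the norm is affine in $\gamma$, so $\mathcal{L}$ is at most a quadratic.

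First I would expand the expression inside the norm and collect powers of $\gamma$. Writing the argument as $A+\gamma B$, where
\[
A=\Phi(P)-\pi\Phi(P_+)-(1-\pi)\Phi(P'),\qquad B=\Phi(P_+)-\Phi(P),
\]
is routine. Next I would simplify both $A$ and $B$ using the mixture identities $P=\pi P_++(1-\pi)P_-$ and, from the label-shift assumption \eqref{LSassumption}, $P'=\pi'P_++(1-\pi')P_-$, which give $\Phi(P)=\pi\Phi(P_+)+(1-\pi)\Phi(P_-)$ and similarly for $\Phi(P')$. Substituting these identities yields
\[
B=(1-\pi)\bigl[\Phi(P_+)-\Phi(P_-)\bigr],\qquad A=-(1-\pi)\,\pi'\bigl[\Phi(P_+)-\Phi(P_-)\bigr],
\]
so that $A+\gamma B=(1-\pi)(\gamma-\pi')\bigl[\Phi(P_+)-\Phi(P_-)\bigr]$ and hence
\[
\mathcal{L}(\gamma)=(1-\pi)^2(\gamma-\pi')^2\,\bigl\|\Phi(P_+)-\Phi(P_-)\bigr\|_{\mathcal H}^2.
\]

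It remains to verify that the multiplicative constant is strictly positive, which makes $\pi'$ the unique minimizer. The factor $(1-\pi)^2$ is positive since $\pi<1$. For the RKHS norm factor, I would invoke universality of $K$: this property asserts exactly that the map $Q\mapsto\Phi(Q)$ is injective on probability measures, so the hypothesis $P_+\neq P_-$ forces $\Phi(P_+)\neq\Phi(P_-)$ and therefore $\|\Phi(P_+)-\Phi(P_-)\|_{\mathcal H}>0$.

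There is no real obstacle here; the argument is algebraic once one exploits linearity of the mean embedding and the label-shift identity \eqref{LSassumption}. The only delicate point is the appeal to universality of $K$ to turn the distributional separation $P_+\neq P_-$ into the Hilbert-space separation of their embeddings, which is the standard and only nontrivial ingredient.
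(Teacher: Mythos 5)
Your proposal is correct and follows essentially the same route as the paper: both reduce $\mathcal{L}(\gamma)$ via the mixture identities to $(1-\pi)^2(\gamma-\pi')^2\|\Phi(P_+)-\Phi(P_-)\|_{\mathcal H}^2$ and then use universality to guarantee the norm factor is strictly positive. The only difference is cosmetic (you collect powers of $\gamma$ first, the paper substitutes the identities into the two bracketed terms directly).
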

\begin{proof}
Denote by $Crit(\gamma)$ the function given by
\begin{equation}
    \label{crit2}
     Crit(\gamma) =|\pi'-\gamma|\times(1-\pi)||\Phi(P_-)- \Phi(P_+)||_{\cal H}.
\end{equation}
Then by noting that
\begin{align*}
&\Phi(P) -\pi\Phi(P_+)=(1-\pi)\Phi(P_-),\\
&\Phi(P') -\gamma\Phi(P_+)=(\pi'-\gamma)\Phi(P_+) +(1-\pi')\Phi(P_-)    
\end{align*}
we have that $\mathcal{L}(\gamma)= [Crit(\gamma)]^2$  and the minimiser of $Crit(\gamma)$ is obviously $\pi'$.    
\end{proof}

In the proposed method, we consider the empirical version of (\ref{Lfunc}) given as
% \begin{align}
% \label{Lfunc_emp}
% &\widehat{\mathcal{L}}(\gamma) =\\
% &||(1-\gamma)[\Phi(\hat P)-\pi \Phi(\hat P_+)]- (1-\pi)[\Phi(\hat{ P'}) -\gamma \Phi(\hat P_+)]||^2_{\cal H} \nonumber
% \end{align}
\begin{equation}
\label{Lfunc_emp}
\widehat{\mathcal{L}}(\gamma) =
||(1-\gamma)\widehat{A}- (1-\pi)\widehat{B}(\gamma)||^2_{\cal H},
\end{equation}
$\widehat{A}=[\Phi(\hat P)-\pi \Phi(\hat P_+)]$ and $\widehat{B}(\gamma)=[\Phi(\hat{ P'}) -\gamma \Phi(\hat P_+)]$

and the estimator of $\pi'$ is defined as its minimizer 
\begin{equation}
\label{TCPU}
\hpp ={\rm argmin}_\gamma \widehat{\mathcal{L}}(\gamma). 
\end{equation}
The estimator defined above will be  called  {\bf TCPU}  further on. Figure \ref{fig_objective}  illustrates the behavior of the objective function and TCPU estimator.
\begin{figure*}[ht!]
\centering
    \begin{tabular}{c c c}
    \includegraphics[width=0.3\textwidth]{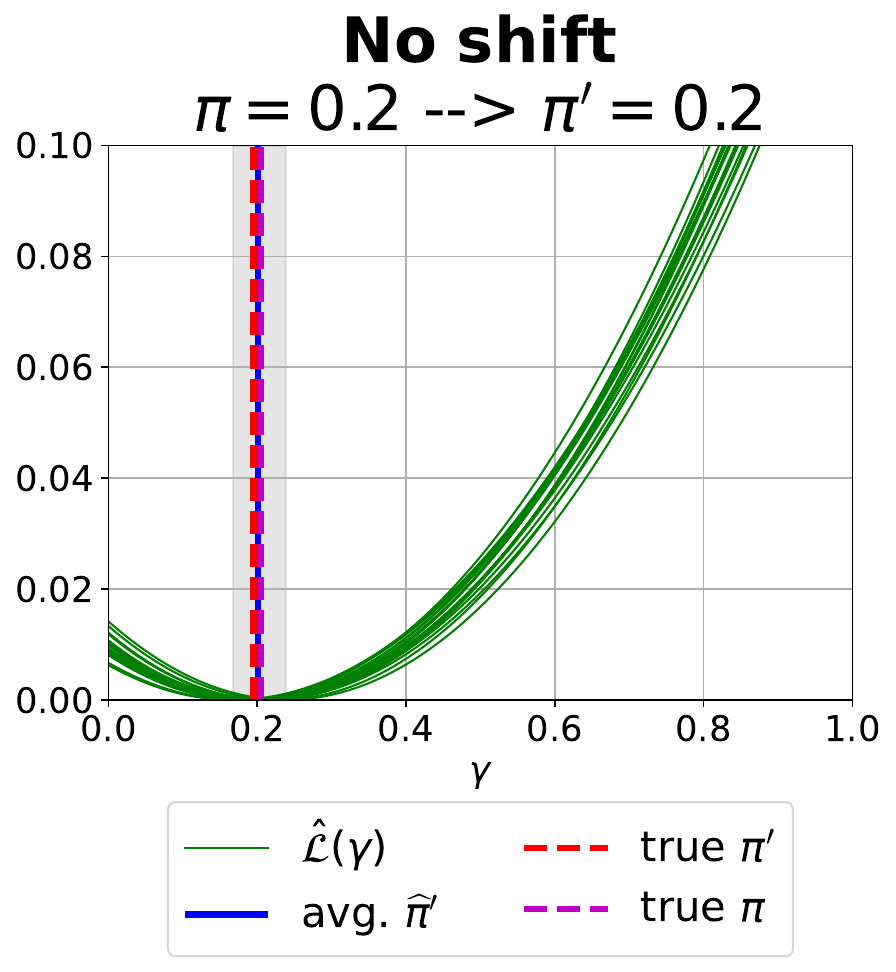} &
      \includegraphics[width=0.3\textwidth]{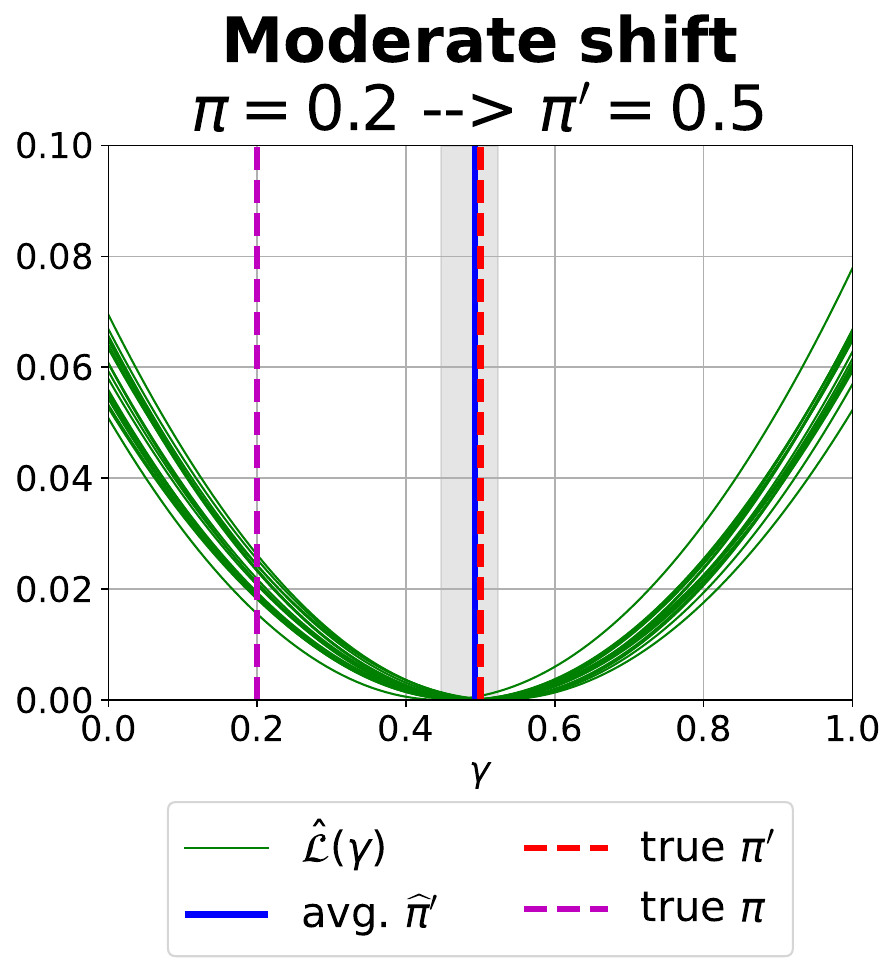} &
      \includegraphics[width=0.3\textwidth]{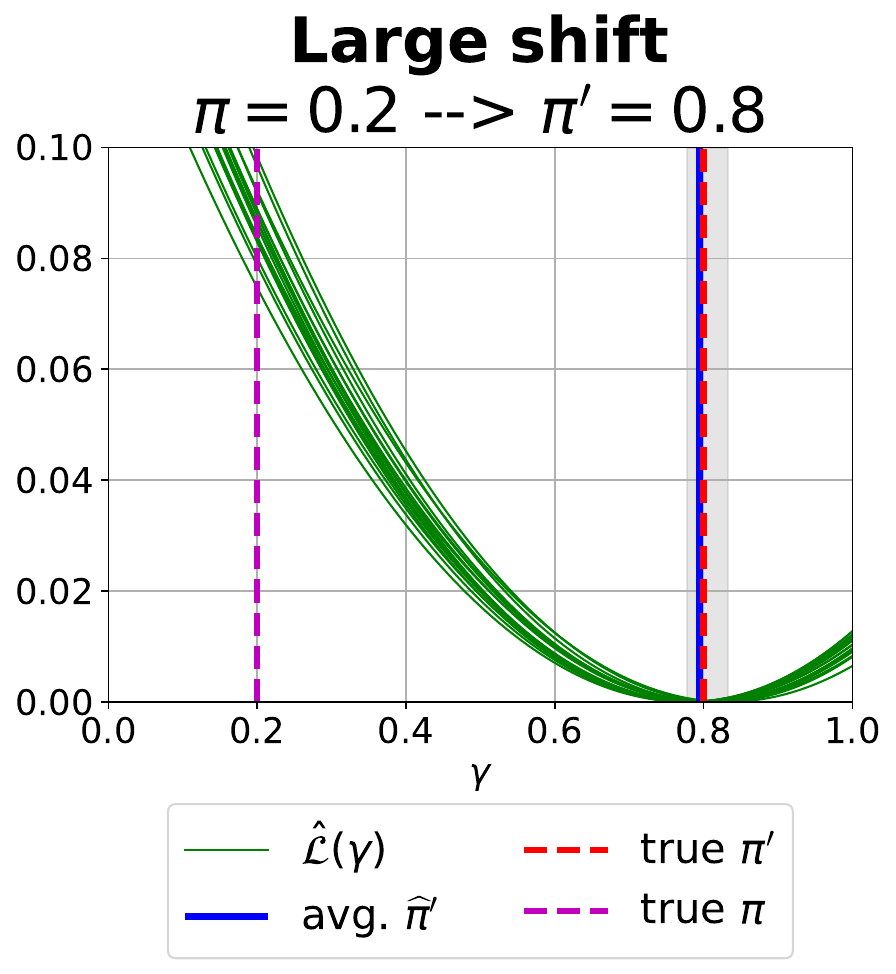}  \\
      \end{tabular}
       \caption{Visualization of the  objective function behavior for $\pi=0.2$. Three cases are considered: $\pi'=0.2$ (no shift), $\pi'=0.5$ (moderate shift) and $\pi'=0.8$ (large shift). The TCPU estimator is defined as $\hpp=\arg\min_{\gamma}\hat{\mathcal{L}}(\gamma)$. The grey area indicates the range of estimator's values  for $20$ runs.}
    \label{fig_objective}
\end{figure*}

For theoretical results  it will be assumed that $\pi$ is known. This assumption is plausible when the large data base corresponding to source distribution is available.

In the following lemma we show that the proposed estimator can be  explicitly calculated using a simple algebraic formula.
\begin{lemma}
 Let $\hpp$  be defined by (\ref{TCPU}). Then we have
 \begin{align}
    \label{explicit}
   &\hpp=\frac{<\Phi(\hat P) -\Phi(\hat P_+), \Delta>_{\cal H}}{||\Phi(\hat P) -\Phi(\hat P_+)||_{\cal H}^2}=\\
   &1-\frac{(1-\pi)<\Phi(\hat P) -\Phi(\hat P_+),\Phi(\hat{P'})- \Phi(\hat P_+)>_{\cal H}}{||\Phi(\hat P) -\Phi(\hat P_+)||_{\cal H}^2},\nonumber 
 \end{align}
 where $\Delta=\Phi(\hat P) -\pi\Phi(\hat P_+) -(1-\pi)\Phi(\hat{P'}).$
\end{lemma}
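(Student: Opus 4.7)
The plan is to reduce the minimization in \eqref{TCPU} to a one-dimensional least-squares problem in $\mathcal{H}$ by collecting the terms inside the norm in $\widehat{\mathcal{L}}(\gamma)$ according to their dependence on $\gamma$. First I would expand
\[
(1-\gamma)[\Phi(\hat P)-\pi \Phi(\hat P_+)] - (1-\pi)[\Phi(\hat{P'}) - \gamma \Phi(\hat P_+)]
\]
and separate the constant part from the part linear in $\gamma$. The constant part is precisely $\Delta = \Phi(\hat P) - \pi\Phi(\hat P_+) - (1-\pi)\Phi(\hat P')$, and the coefficient of $-\gamma$ collapses (since $\pi\Phi(\hat P_+)+(1-\pi)\Phi(\hat P_+) = \Phi(\hat P_+)$) to $v := \Phi(\hat P) - \Phi(\hat P_+)$. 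Thus
\[
\widehat{\mathcal{L}}(\gamma) = \|\Delta - \gamma v\|_{\mathcal{H}}^{2}.
\]

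Next, I would expand this squared norm using the inner product structure of $\mathcal{H}$ to obtain the quadratic $\widehat{\mathcal{L}}(\gamma) = \|\Delta\|^{2}_{\mathcal{H}} - 2\gamma \langle \Delta,v\rangle_{\mathcal{H}} + \gamma^{2}\|v\|^{2}_{\mathcal{H}}$. Differentiating in $\gamma$ and setting the derivative to zero yields the unique minimizer
\[
\hpp = \frac{\langle v,\Delta\rangle_{\mathcal{H}}}{\|v\|^{2}_{\mathcal{H}}},
\]
which is exactly the first expression in \eqref{explicit}. (It is implicit here that $\|v\|_{\mathcal{H}}\neq 0$; I would note that this holds generically since $\Phi(\hat P)=\Phi(\hat P_+)$ would force the two empirical distributions to coincide on the RKHS.)

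To obtain the second expression, I would rewrite $\Delta$ in terms of $v$ by adding and subtracting $\Phi(\hat P_+)$:
\[
\Delta = [\Phi(\hat P) - \Phi(\hat P_+)] - (1-\pi)[\Phi(\hat P') - \Phi(\hat P_+)] = v - (1-\pi)[\Phi(\hat P') - \Phi(\hat P_+)].
\]
Substituting this into the numerator and using bilinearity of $\langle\cdot,\cdot\rangle_{\mathcal{H}}$ splits the ratio into $\|v\|^{2}_{\mathcal{H}}/\|v\|^{2}_{\mathcal{H}} = 1$ minus the desired correction term, yielding the second equality.

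There is really no hard step here; the entire proof is bookkeeping in the Hilbert space $\mathcal{H}$. The only point requiring slight care is the algebraic simplification that produces the coefficient $v = \Phi(\hat P) - \Phi(\hat P_+)$ in front of $\gamma$, since this relies on the particular combination $\pi + (1-\pi) = 1$ that collapses the $\Phi(\hat P_+)$ contributions.
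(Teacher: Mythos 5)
Your proof is correct and follows essentially the same route as the paper: collect the terms under the norm into $\Delta-\gamma[\Phi(\hat P)-\Phi(\hat P_+)]$, minimize the resulting quadratic in $\gamma$, and then rewrite $\Delta=[\Phi(\hat P)-\Phi(\hat P_+)]-(1-\pi)[\Phi(\hat P')-\Phi(\hat P_+)]$ to get the second form. You in fact spell out the ``simple algebraic manipulations'' that the paper leaves implicit, and your remark on the need for $\|\Phi(\hat P)-\Phi(\hat P_+)\|_{\cal H}\neq 0$ matches a caveat the authors address separately before Theorem~\ref{th_cons}.
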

\begin{remark}
In view of the first equality in (\ref{explicit}),   $ \hpp$  has  a simple geometric interpretation: it is a coefficient of projection of a function $\Delta$ on  $\Phi(\hat P) -\Phi(\hat P_+)$ in $ {\cal H}$.
\end{remark}
\begin{proof} Simple calculations show that the expression under the norm in (\ref{Lfunc_emp}) equals
\begin{align*}
&-\gamma[\Phi(\hat P) -\Phi(\hat P_+)] + \Phi(\hat P) -\pi\Phi(\hat P_+) -(1-\pi)\Phi(\hat{P'})\\
&=-\gamma[\Phi(\hat P) -\Phi(\hat P_+)] + \Delta.    
\end{align*}

Thus,  the squared norm equals
\begin{align*}
&\gamma^2||\Phi(\hat P) -\Phi(\hat P_+))||_{\cal H}^2  +\\
&2\gamma<\Phi(\hat P_+)-\Phi(\hat P),\Delta >_{\cal H} + ||\Delta||_{\cal H}^2    
\end{align*}
and the first form of $\hpp$ follows  by calculating a minimizer of the above function. The second formula follows by simple algebraic manipulations.
\end{proof}
The kernel approach is a core of the Maximum Mean Discrepancy (MMD) method which matches the distributions based on the features in RHKS induced by a  kernel $K$. It is
frequently used in machine learning and statistics to compare  a data  distribution with a specific distribution, in  a two-sample problem and covariate shift detection (see \cite{Gretton2012}). %- frequently in conjunction with Stein's method. %(\cite{Anastasiou2023}). 
It has been also applied for the  label shift problem in the classical framework.
 Namely, for the case  when  distribution $P_{XY}$ is observable  the approach  relies on the equality \citep{Zhang2013}
\begin{align}
    \label{kernel_LS}
     &\pi'= \\
     &{\rm argmin}_{\lambda\in[0,1]}|| \Phi(P_{X'}) -[\lambda \Phi(P_+) + (1-\lambda)\Phi(P_-)]||^2_{\cal H}\nonumber
\end{align}
(see also \cite{Iyer2014} and \cite{Dussap2023}). %The corresponding estimator is called Maximum Mean Discrepancy (MMD) estimator.
In the considered scenario, a direct application of (\ref{kernel_LS}) to estimate $\pi '$ is clearly  infeasible, as observations from the negative class are not available.
The estimator (\ref{TCPU}) can be considered as a modification  of  the MMD approach applied  for a label shift in  PU setting.

\subsection{Asymptotic consistency  and non-asymptotic error bounds for the proposed estimator}
We let $N=\min(n,m, n')$. In the next result we establish asymptotic and non-asymptotic error bounds for $\hpp.$ In its proof we also state conditions, which guarantee that $||\Phi(\hat P) -\Phi(\hat P_+)||_{\cal H} >0,$ which is implicitly assumed  in~\eqref{explicit}. The proofs of all results below are given in the supplement.

\begin{theorem}
\label{th_cons}
 Suppose that a kernel $K$ is universal, 
$P_+\neq P_-$ and $\pi<1.$\\
(i) Moreover, assume   $\mathbbm{E} K(X,X)<\infty$ for $X\sim P$ and analogous conditions hold  for $X^+\sim P_{X|Y=1}$ and $X'\sim P '$. Then for $N\to\infty$ we have  $\hat{\pi}'\to \pi'$ in probability.\\
(ii) Assume that $M=\sup_x K(x,x)<\infty.$ Moreover, fix $\alpha\in (0,1)$ and $\delta \leq \exp(-(\sqrt{2}+1)^2/2)$ and let
\begin{equation}
    \label{assum}
N\geq \frac{16 M\log(1/\delta)}{(1-\alpha)^2(1-\pi)^2||\Phi(P_-)-\Phi (P_+)||^2_{\cal H}} \;.
\end{equation}
 Then we have
\begin{equation}
    \label{th_claim}
P\left(|\hpp- \pi'|\leq \frac{4 \sqrt{\frac{M}{N}\log(1/\delta)}}{\alpha(1-\pi)||\Phi(P_ -)-\Phi (P_+)||_{\cal H}}\right) \geq 1-3\delta. 
\end{equation}
\end{theorem}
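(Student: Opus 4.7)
The plan is to reduce the problem to a population identity combined with concentration of empirical kernel means. Writing $u:=\Phi(P)-\Phi(P_+)$ and $v:=\Phi(P)-\pi\Phi(P_+)-(1-\pi)\Phi(P')$, the decompositions $\Phi(P)=\pi\Phi(P_+)+(1-\pi)\Phi(P_-)$ and $\Phi(P')=\pi'\Phi(P_+)+(1-\pi')\Phi(P_-)$ give
\[
u=(1-\pi)[\Phi(P_-)-\Phi(P_+)],\qquad v=(1-\pi)\pi'[\Phi(P_-)-\Phi(P_+)]=\pi' u.
\]
With $\hat u,\hat v$ denoting the empirical analogues (so that $\hat v=\Delta$ from the explicit-formula lemma and $\hpp=\langle \hat u,\hat v\rangle_{\cal H}/\|\hat u\|_{\cal H}^2$ by (\ref{explicit})), we obtain
\[
\hpp-\pi'=\frac{\langle \hat u,\hat v-\pi'\hat u\rangle_{\cal H}}{\|\hat u\|_{\cal H}^2},\qquad |\hpp-\pi'|\le\frac{\|\hat v-\pi'\hat u\|_{\cal H}}{\|\hat u\|_{\cal H}}
\]
by Cauchy--Schwarz, and the task reduces to bounding numerator and denominator on a common high-probability event.

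For (ii) the workhorse is the classical MMD deviation inequality: under $\sup_x K(x,x)\le M$, McDiarmid applied to $\|\Phi(\hat Q)-\Phi(Q)\|_{\cal H}$ combined with the moment estimate $\mathbbm{E}\|\Phi(\hat Q)-\Phi(Q)\|_{\cal H}\le\sqrt{M/n}$ gives $\|\Phi(\hat Q)-\Phi(Q)\|_{\cal H}\le\sqrt{M/n}\,(1+\sqrt{2\log(1/\delta)})$ with probability at least $1-\delta$; the hypothesis $\delta\le\exp(-(\sqrt 2+1)^2/2)$ is calibrated so that $1+\sqrt{2\log(1/\delta)}\le 2\sqrt{\log(1/\delta)}$, simplifying the bound to $2\sqrt{M\log(1/\delta)/N}$ after using $N\le n,m,n'$. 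A union bound over the three samples gives, with probability at least $1-3\delta$, the simultaneous inequalities $\|\Phi(\hat Q)-\Phi(Q)\|_{\cal H}\le 2\sqrt{M\log(1/\delta)/N}$ for $Q\in\{P,P_+,P'\}$. Expanding
\[
\hat v-\pi'\hat u=(1-\pi')[\Phi(\hat P)-\Phi(P)]-(\pi-\pi')[\Phi(\hat P_+)-\Phi(P_+)]-(1-\pi)[\Phi(\hat{P'})-\Phi(P')]
\]
(which uses $v=\pi' u$), the triangle inequality yields a coefficient sum $(1-\pi')+|\pi-\pi'|+(1-\pi)\le 2$, so $\|\hat v-\pi'\hat u\|_{\cal H}\le 4\sqrt{M\log(1/\delta)/N}$. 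For the denominator, $\|\hat u-u\|_{\cal H}\le 4\sqrt{M\log(1/\delta)/N}$ and the reverse triangle inequality give $\|\hat u\|_{\cal H}\ge (1-\pi)\|\Phi(P_-)-\Phi(P_+)\|_{\cal H}-4\sqrt{M\log(1/\delta)/N}$; the sample-size condition (\ref{assum}) is calibrated exactly so that the subtracted term is at most $(1-\alpha)(1-\pi)\|\Phi(P_-)-\Phi(P_+)\|_{\cal H}$, leaving $\|\hat u\|_{\cal H}\ge\alpha(1-\pi)\|\Phi(P_-)-\Phi(P_+)\|_{\cal H}$. Dividing produces (\ref{th_claim}).

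Part (i) is comparatively cheap. The hypothesis $\mathbbm{E}K(X,X)<\infty$ and its analogues give $\mathbbm{E}\|\Phi(\hat Q)-\Phi(Q)\|_{\cal H}^2\le \mathbbm{E}K(X,X)/n\to 0$, so each empirical mean converges in probability to its population counterpart in $\cal H$. Universality of $K$ together with $P_+\neq P_-$ forces $\|\Phi(P_-)-\Phi(P_+)\|_{\cal H}>0$, hence $u\neq 0$; the map $(x,y)\mapsto\langle x,y\rangle_{\cal H}/\|x\|_{\cal H}^2$ is continuous at $(u,v)$, and the continuous mapping theorem yields $\hpp\to\pi'$ in probability.

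The main obstacle is controlling the denominator: because $\hpp$ is a ratio, even small fluctuations in the direction $u$ can blow up the error when $\|u\|_{\cal H}$ is small, which is precisely why (ii) requires the explicit sample-size condition (\ref{assum}) scaling inversely with the squared population separation $\|\Phi(P_-)-\Phi(P_+)\|_{\cal H}^2$. Once this anti-concentration bound on $\|\hat u\|_{\cal H}$ is secured, the rest is careful bookkeeping around a single kernel-mean concentration inequality.
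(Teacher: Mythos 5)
Your proposal is correct and follows essentially the same route as the paper: the same Cauchy--Schwarz reduction $|\hpp-\pi'|\le \|\hat v-\pi'\hat u\|_{\cal H}/\|\hat u\|_{\cal H}$ built on the explicit formula, the same population identity making $\hat v-\pi'\hat u$ a sum of three centered empirical deviations with coefficient sum at most $2$, the same McDiarmid-plus-Jensen concentration lemma with the identical $\delta$ calibration, and the same use of \eqref{assum} to lower-bound the denominator by $\alpha(1-\pi)\|\Phi(P_-)-\Phi(P_+)\|_{\cal H}$. Part (i) likewise matches the paper's Chebyshev-plus-continuity argument, so there is nothing to flag.
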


We note that consistency of $\widehat{\pi}'$ is proved in Theorem \ref{th_cons}(i) under weak conditions.
Indeed, dropping the conditions   $P_+ \neq P_ -$ and $\pi\ne 1$  makes the problem ill-posed. Finally, the restrictions imposed on a kernel are satisfied, for instance, for a gaussian kernel.

The claim  of  Theorem \ref{th_cons} (ii) is stronger (it is a nonasymptotic result implying asymptotic consistency), so it needs more restrictive assumptions as well. However, these conditions are reasonable as in (i). For instance,  a gaussian kernel satisfies the assumption in (ii) with $M=1.$
 The dependence of an error bound on $N,\delta, \pi, ||\Phi(P_ -)-\Phi (P_+)||_{\cal H}$ is stated explicitly  in \eqref{th_claim}.

\begin{theorem}
    \label{th_random}
Assume that $M=\sup_x K(x,x)<\infty.$ 
 Then for any $\delta \leq \exp(-(\sqrt{2}+1)^2/2)$ we have that
\begin{equation}
    \label{th_claim_random}
P\left(|\hpp- \pi'|\leq \frac{4 \sqrt{\frac{M}{N}\log(1/\delta)}}{|| \Phi(\hat P)-\Phi ( \hat P_+)||_{\cal H}}\right) \geq 1-3\delta. 
\end{equation}
    
\end{theorem}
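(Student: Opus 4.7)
The key structural input is the explicit formula $\hpp = \langle v,\,\Delta\rangle_{\cal H}/\|v\|_{\cal H}^2$ with $v := \Phi(\hat P) - \Phi(\hat P_+)$ and $\Delta$ as in~\eqref{explicit}, together with the population identity $\Phi(P)-\pi\Phi(P_+)-(1-\pi)\Phi(P')=\pi'[\Phi(P)-\Phi(P_+)]$, which is a direct rewriting of the observation used implicitly when verifying that $\mathcal L(\gamma) = [Crit(\gamma)]^2$. Combining these two facts yields the clean identity
\[
\hpp-\pi' \;=\; \frac{\langle v,\,\Delta-\pi' v\rangle_{\cal H}}{\|v\|_{\cal H}^2},
\]
and a single application of Cauchy--Schwarz reduces the task to controlling $\|\Delta-\pi' v\|_{\cal H}/\|v\|_{\cal H}$, which is exactly the ratio appearing on the right-hand side of~\eqref{th_claim_random}. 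No further manipulation of the denominator will be needed, which is precisely why the empirical $\|v\|_{\cal H}$ can be retained instead of being replaced by a population quantity.

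\textbf{Step 2 (algebraic decomposition).} I would substitute the definitions of $\Delta$ and $v$ and subtract the population identity above to obtain
\[
\Delta-\pi' v \;=\; (1-\pi')[\Phi(\hat P)-\Phi(P)] \;-\; (\pi-\pi')[\Phi(\hat P_+)-\Phi(P_+)] \;-\; (1-\pi)[\Phi(\hat P')-\Phi(P')].
\]
Thus $\|\Delta-\pi' v\|_{\cal H}$ is bounded by a linear combination of three standard MMD fluctuation terms $\|\Phi(\hat Q)-\Phi(Q)\|_{\cal H}$ for $Q\in\{P,P_+,P'\}$. A short case split on whether $\pi\leq\pi'$ or $\pi>\pi'$ gives the sharp coefficient sum $(1-\pi')+|\pi-\pi'|+(1-\pi) \leq 2$, which is essential to reach the advertised constant $4$ rather than $6$.

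\textbf{Step 3 (concentration).} For each sample I would apply McDiarmid's inequality to the functional $f(X_1,\ldots,X_n) := \|\Phi(\hat Q)-\Phi(Q)\|_{\cal H}$, whose bounded differences are of order $2\sqrt M/n$, combined with the Jensen-type bound $\mathbbm{E} f \leq (\mathbbm{E} f^2)^{1/2} \leq \sqrt{M/n}$ (the second inequality follows from expanding the square and using $K(x,x)\leq M$). This yields $f \leq \sqrt{M/n}\,(1+\sqrt{2\log(1/\delta)})$ with probability at least $1-\delta$. The hypothesis $\delta\leq\exp(-(\sqrt{2}+1)^2/2)$ is exactly what lets me absorb the $1$ into $\sqrt{2\log(1/\delta)}$, upgrading the bound to $\|\Phi(\hat Q)-\Phi(Q)\|_{\cal H} \leq 2\sqrt{M\log(1/\delta)/N}$ (using $n \geq N$). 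A union bound over the three samples, combined with Step~2, yields $\|\Delta-\pi' v\|_{\cal H} \leq 4\sqrt{M\log(1/\delta)/N}$ on an event of probability at least $1-3\delta$, and Step~1 concludes.

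\textbf{Main obstacle.} The algebra is routine once the decomposition in Step~2 is written down, and the MMD concentration is standard. The delicate part is quantitative: hitting the constant $4$ requires both the sharp coefficient sum $\leq 2$ from the case split \emph{and} a careful calibration of the McDiarmid tail so that the extra $\sqrt{M/n}$ from the expectation is absorbed into $\sqrt{2M\log(1/\delta)/n}$, which is precisely why the hypothesis on $\delta$ takes the form $\delta\leq\exp(-(\sqrt{2}+1)^2/2)$. In contrast to Theorem~\ref{th_cons}(ii), no lower bound on $N$ is needed, since the empirical denominator $\|v\|_{\cal H}$ is kept in the final bound rather than being replaced by $(1-\pi)\|\Phi(P_-)-\Phi(P_+)\|_{\cal H}$.
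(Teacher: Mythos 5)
Your proposal is correct and follows essentially the same route as the paper: the explicit projection formula plus Cauchy--Schwarz, the decomposition of $\Delta-\pi'[\Phi(\hat P)-\Phi(\hat P_+)]$ via the population identity, the coefficient sum $(1-\pi')+(1-\pi)+|\pi-\pi'|\leq 2$, and the McDiarmid/Jensen concentration bound (the paper's Lemma~\ref{concentration2}) with a union bound over the three samples. The only difference is presentational: you make explicit the case split behind the constant $4$ and the calibration of $\delta$, which the paper leaves implicit.
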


We stress the differences between Theorems \ref{th_random} and \ref{th_cons}. Probability inequality (\ref{th_claim_random})  does not require  assumption \eqref{assum} and it yields  a bound  on an estimation error which depends on $|| \Phi(\hat P)-\Phi ( \hat P_+)||_{\cal H}.$ Notice that this bound can be calculated in practice. On the other hand, the error bound in \eqref{th_claim} is nonrandom and explicitly establishes the dependence on $\pi$ and a distance between $P_ -$ and $P_ + .$

Till now we have assumed that source prior $\pi$ is known. In the experiments we estimate $\pi$ using a well-known KM2 method \citep{Ramaswamy2016} described below  and then plug-in it into (\ref{Lfunc_emp}).  We stress that  this requires additional assumptions  implying that $\pi$ is identifiable, which are imposed for all PU-based approaches.  The most common one is that the distribution of negative samples is {\it not} a mixture of distribution of positive samples and some other probability distribution \citep{Ramaswamy2016, Blanchard2010}.

\section{RELATED WORK}
\label{Sec:Related_work}

\subsection{Method DRPU}
The most related method is DRPU \citep{Nakajima23}, which  in the label-shift setting considered here, constructs empirical Bayes classifier for samples from label shifted population. Other methods consider similar but different scenarios such as PU covariate shift \citep{Sakai2019} and positive data shift \citep{Hammoudeh2020}.
DRPU involves estimator of ratio $r$ of densities of  distributions  $P_+$  and  $P$ for training data and of both prior probabilities $\pi$ and $\pi'$. Estimator $\hat r$ of $r$ is a minimiser of expected Bregman divergence functional (cf Section 2.5 in  \cite{Nakajima23}) and both prior estimators are based on an observation (cf \cite{Blanchard2010}) that $\pi$ and $\pi'$ can be recovered in PU setting by minimising $P(A)/P_+(A)$ (respectively $P'(A)/P_+(A)$) over all sets $A$ for which $P_+(A)>0$. In \cite{Nakajima23},  ratio $\hat{P}'(A)/\hat P_+(A)$ is minimised over sets being the level sets of $\hat r$.  The estimator based on the above method will be denoted simply as {\bf DRPU}.\\
\subsection{Adapting the KM method to label shift}
\label{KM-LS}
%The popular KM \cite{Ramaswamy2016}  method of estimating  the prior  in PU setting  can be straightforwardly modified in order to account for label shift. 
As a benchmark, we also introduce  a modification of KM2 estimator \citep{Ramaswamy2016}  for label-shift PU setting.
The KM2 estimator is based on the observation that distribution of negative class $P_-$ can be written as $P_-=\lambda P +(1-\lambda) P_+$, where $\lambda=1/(1-\pi)$ and thus, analogously to (\ref{kernel_LS}), estimator of $\lambda$ can be constructed by projecting $\gamma \phi(P) +(1-\gamma) \phi(P_+)$  on a convex $\mathcal{ C}$  hull of  $\phi(X_1),\ldots,\phi(X_n),\phi(X_1^+),\ldots,\phi(X_m^+) $ and  defining $\hat \lambda$ as $\gamma$ yielding the smallest distance. This leads to two estimators of $\pi$, KM1 and KM2 investigated in  \cite{Ramaswamy2016}. As unavailable positive samples for test population have the same distribution as positive observations from training distribution,
we  can apply  approach from \cite{Ramaswamy2016}  to samples $X'_1,\ldots,X'_{n'},X_1^+,\ldots,X_m^+$  and obtain KM2 estimator of $\pi'$ also investigated below. The adaptation will be called {\bf KM2-LS} in the following. Note that as KM2-LS does not use available observations  from $P$ at all, it is not expected to work well; this observation will be confirmed in our experiments.

\section{EXPERIMENTS}

\subsection{Methods}
We empirically evaluate the effectiveness of TCPU \footnote{The source code: \url{https://github.com/teisseyrep/TCPU}} to recover the true target class prior $\pi'$. 
As baselines we used DRPU \citep{Nakajima23} and KM2-LS methods described in Section \ref{Sec:Related_work}. In the case of DRPU, we utilised the implementation made publicly available by the authors. In the case of KM2-LS, we applied the code of the standard KM2 method \citep{Ramaswamy2016} and adopted it to our setting as described in Section \ref{Sec:Related_work}. Note that TCPU   requires knowledge of $\pi$. Since typically $\pi$ remains unknown, we estimate it using KM2 estimator \citep{Ramaswamy2016}. Importantly, the KM2-LS method does not require the $\pi$ estimation.
The DRPU method is the only one that requires learning a parametric model. As in the original work, we considered Multi-Layer Perceptron (MLP) to this end. Technical details about the model used in DRPU and the selection of hyperparameters are discussed in the supplement. For TCPU, we used Gaussian kernel $K(x,y)=\exp(-\tau||x-y||^2)$
with the default value of parameter $\tau=1/p$, where $p$ is the number of features.

\subsection{Datasets}

The experiments were conducted on 11 datasets, including one synthetic dataset, 3 image datasets: CIFAR-10, MNIST, and FASHION \citep{PyTorch19} and 7 tabular datasets from the UCI repository: Diabetes, Spambase, Segment, Waveform, Vehicle, Yeast and Banknote.
For the synthetic dataset, negative observations are generated from a 10-dimensional normal distribution $N(0, I)$, and positive observations are generated from $N(a, I)$, where $a = (1, \ldots, 1)$.
The characteristics of the UCI and image datasets are provided in Table \ref{Tab:datasets}.
Tabular datasets with multiple classes were transformed into binary classification datasets, where the most common class is treated as the positive class, and the remaining classes are combined into the negative class. For image datasets, the binary class variable is defined according to the specific dataset, following methods used in other PU learning papers \citep{Kiryo2017,Gong2021,Nakajima23}.
For MNIST, even digits form the positive class, and odd digits form the negative class. In CIFAR-10, vehicles form the positive class, and animals form the negative class. For FASHION, clothing items worn on the upper body are marked as positive cases, and the remaining items are assigned to the negative class.
For image data, we use a pre-trained deep neural network, ResNet18, to extract the feature vector. For each image, the feature vector, with a dimension of 512, is the output of the average pooling layer. From the extracted 512-dimensional feature vector, we select the 30 most correlated features with the class variable to reduce the dimensionality of the problem.

\begin{table}[ht!]
\begin{center}
\caption{Statistics of the considered data sets.}
\label{Tab:datasets}
\begin{tabular}{l|lllll}
\toprule 
                Dataset    & $n$ &   $p$ & $P(Y=1)$ & type\\
                \midrule
                  Diabetes &   768  &  8  &0.35 &   tab\\
                  Spambase &  4601  & 57  &0.39 &   tab\\               
                  Segment  & 2310   &19  &0.14  &    tab\\
            Waveform  & 5000   &40  &0.34  &   tab\\
                    Yeast  & 1484   & 8  &0.31  &    tab\\
                  Vehicle  &  846  & 18  &0.26 &     tab\\
                Banknote  &  1347  & 4  &0.44 &     tab\\
\midrule
                  CIFAR10  & 50000   &  - & 0.4    & img\\
                  MNIST  &  60000  & -  & 0.49  & img\\
                  Fashion  & 60000   & -  & 0.5     & img\\                  
\bottomrule
\end{tabular}
\end{center}
\end{table}

\subsection{Experimental settings}

First, each dataset is split into a source and a target dataset in equal proportions. For image data, we use the splits defined in the PyTorch library \citep{PyTorch19}.
For synthetic datasets, we generate observations for fixed values of $\pi$ and $\pi'$.
For real datasets, we simulate a label shift scenario using the downsampling technique. Specifically, we randomly remove observations from one of the classes in both the source and target datasets to control the class priors $\pi$ and $\pi'$, respectively.
Finally, based on the source data, we artificially create a PU dataset by selecting some positive observations for the labeled subset, while the unlabeled subset consists of a mixture of positive and negative observations. 
We follow a known  procedure %, described 
%in \cite{MielniczukWawrzenczyk2024}, 
to control the size of the source data and the labeling frequency $c$, which represents the percentage of labeled observations among all positive observations. 
The details of the procedure are provided in the supplement. 

In the experiments, we consider $c=0.25,0.5$ for sythetic and image datasets and $c=0.5$ for UCI datasets. The entire target dataset is treated as unlabeled.
The considered methods take as input the source PU data and unlabeled target data. For each method, we calculate the absolute estimation error $|\pi' - \hpp|$, where $\hpp$ is the estimator returned by the method.
We perform 20 repetitions of the above procedure and analyze the distributions of the errors as well as the distributions of the estimators themselves.

\begin{figure*}[ht!]
\centering
    \begin{tabular}{c c c c}
    \includegraphics[width=0.3\textwidth]{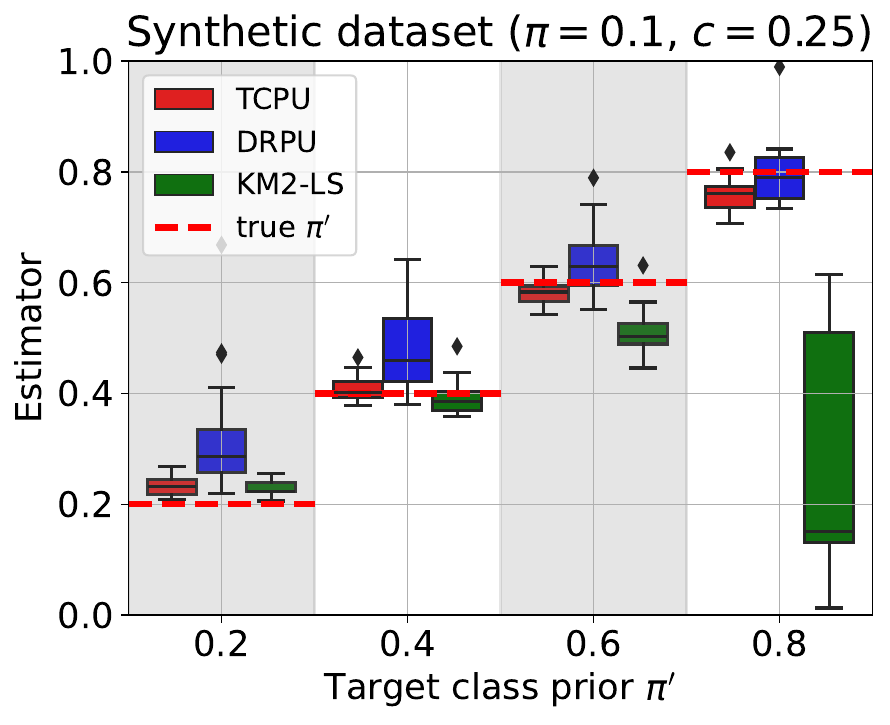}  &
  \includegraphics[width=0.3\textwidth]{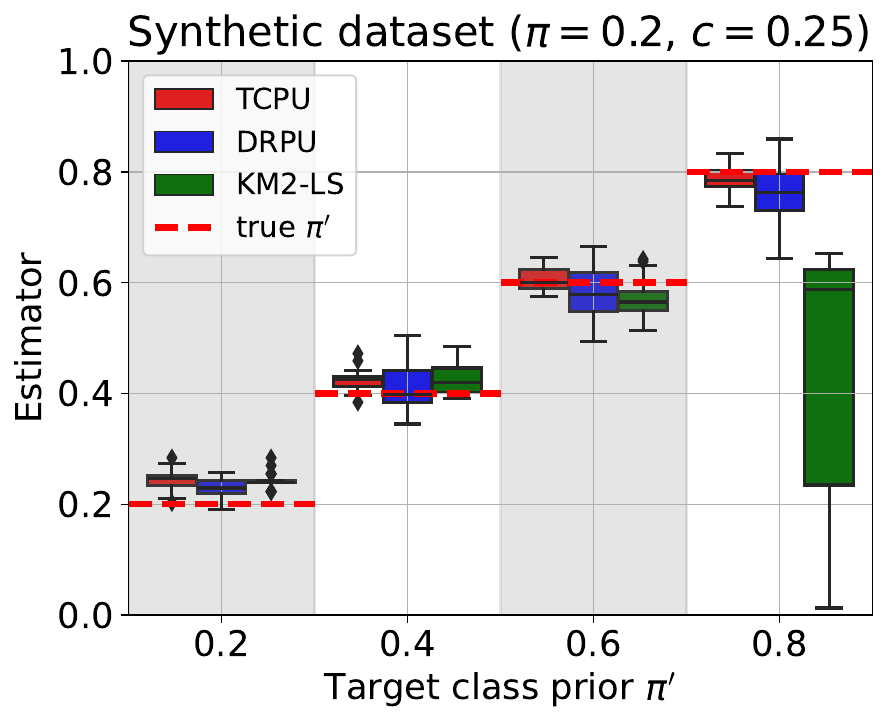} &
          \includegraphics[width=0.3\textwidth]{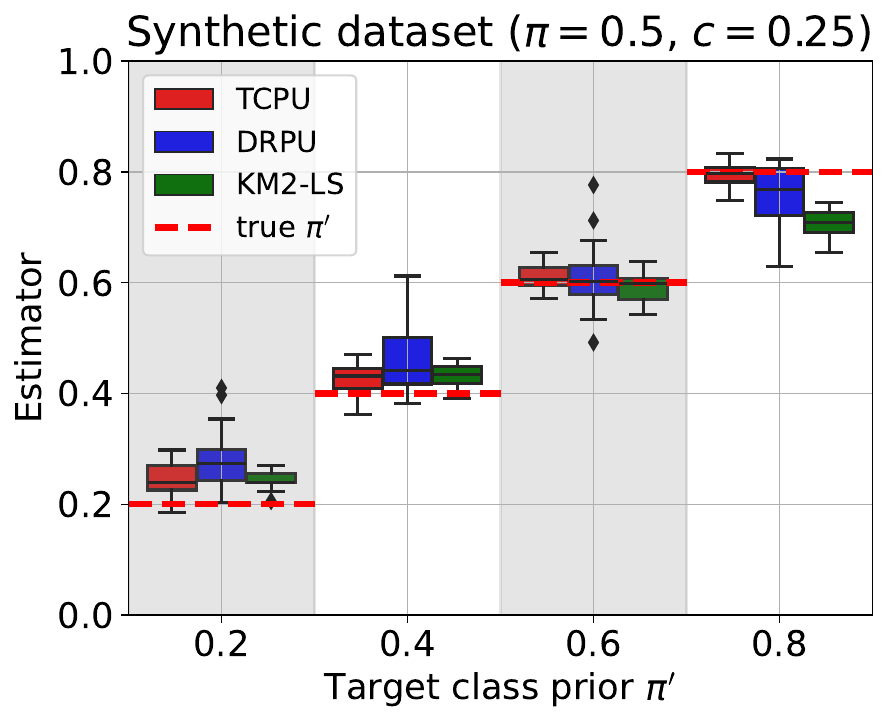}
      \end{tabular}
    \caption{Distribution of estimators (red line indicates  the true $\pi'$). Size of the source data and the target data is $2000$.}
    \label{boxplot_artificial_1_main}
\end{figure*}

\begin{figure*}[ht!]
\centering
    \begin{tabular}{c c c}
    \includegraphics[width=0.3\textwidth]{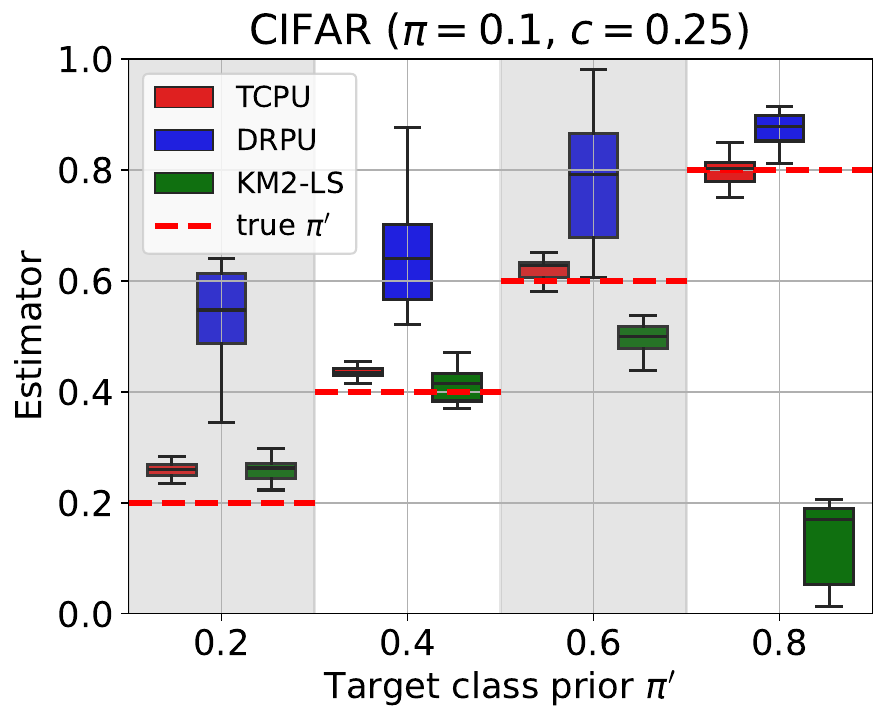}  &
    \includegraphics[width=0.3\textwidth]{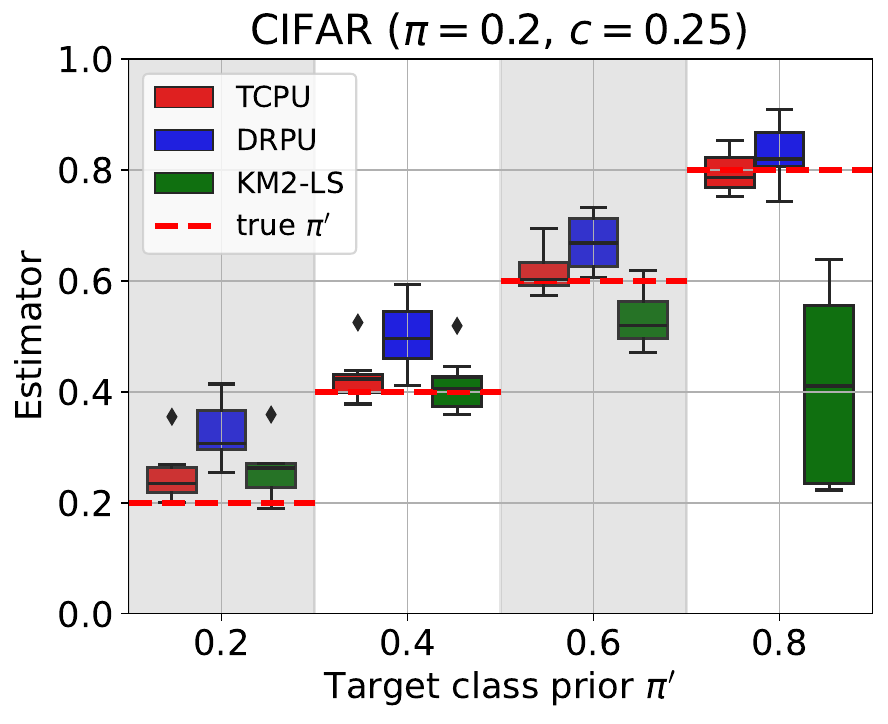}  &
\includegraphics[width=0.3\textwidth]{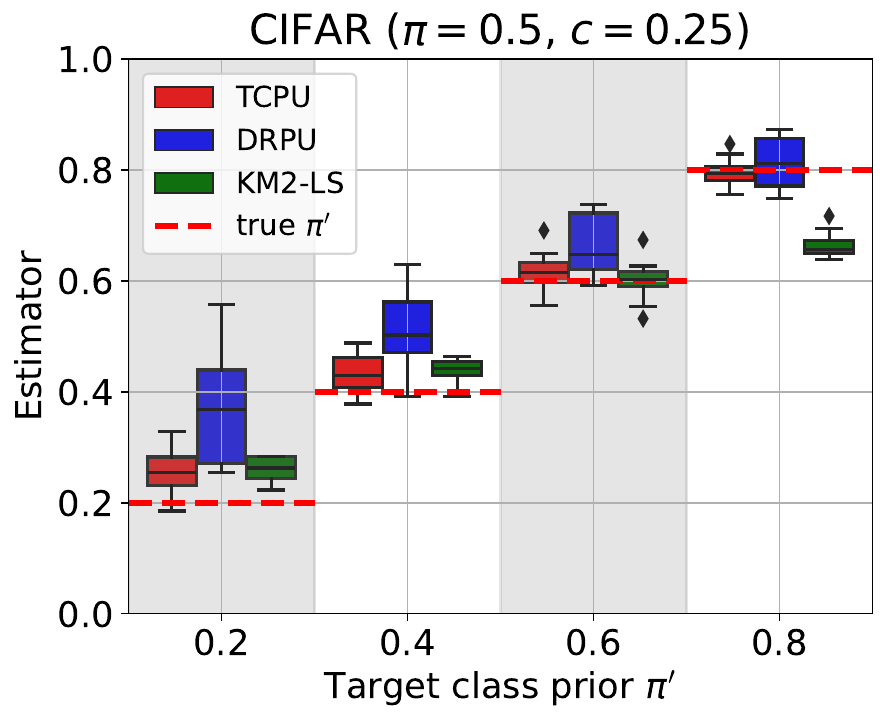}\\
    \includegraphics[width=0.3\textwidth]{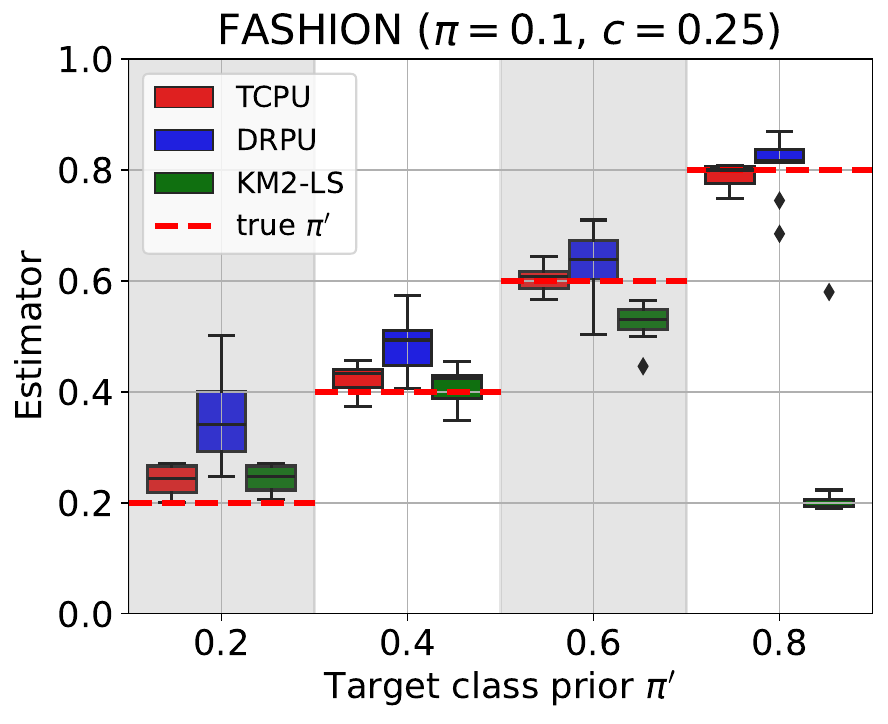}  &
    \includegraphics[width=0.3\textwidth]{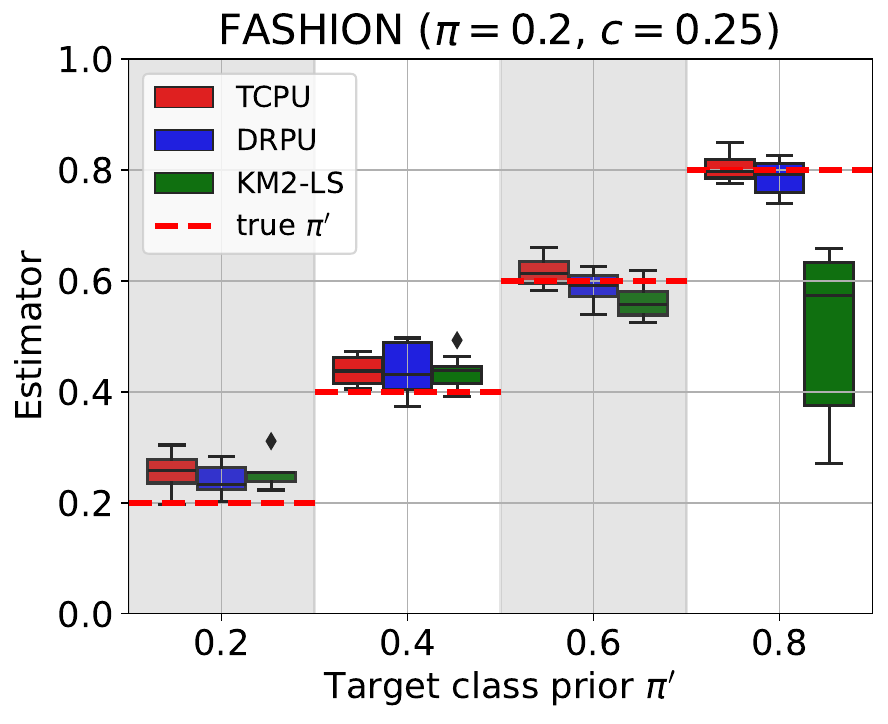}  &
\includegraphics[width=0.3\textwidth]{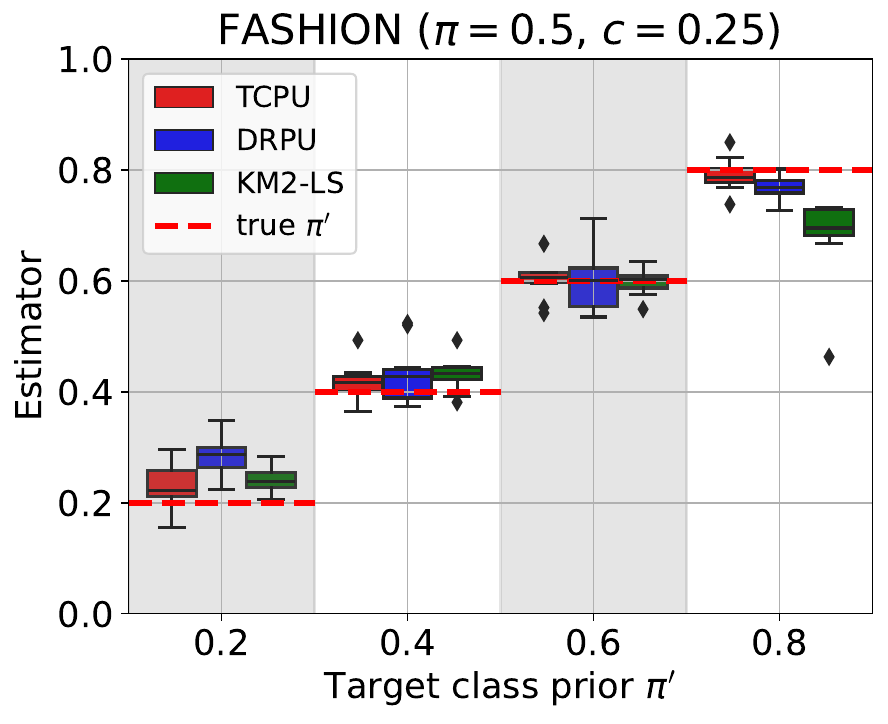}\\
    \includegraphics[width=0.3\textwidth]{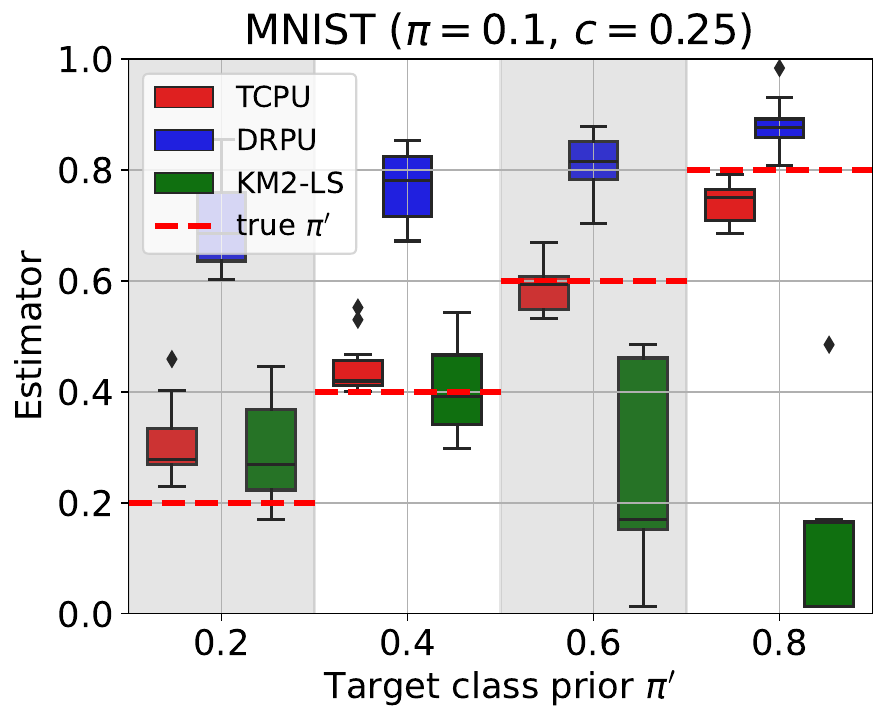}  &
    \includegraphics[width=0.3\textwidth]{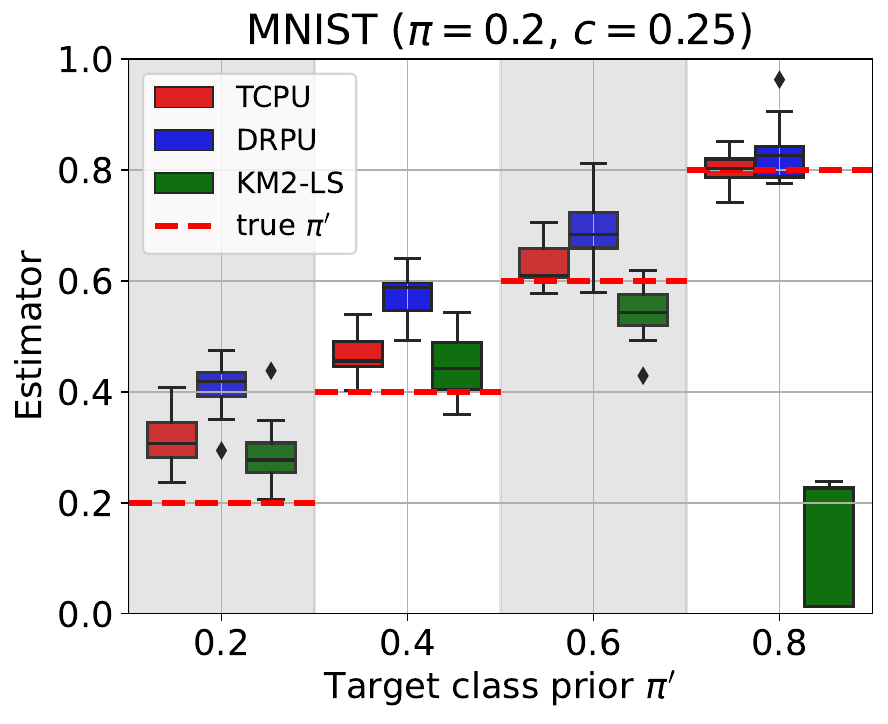}  &
\includegraphics[width=0.3\textwidth]{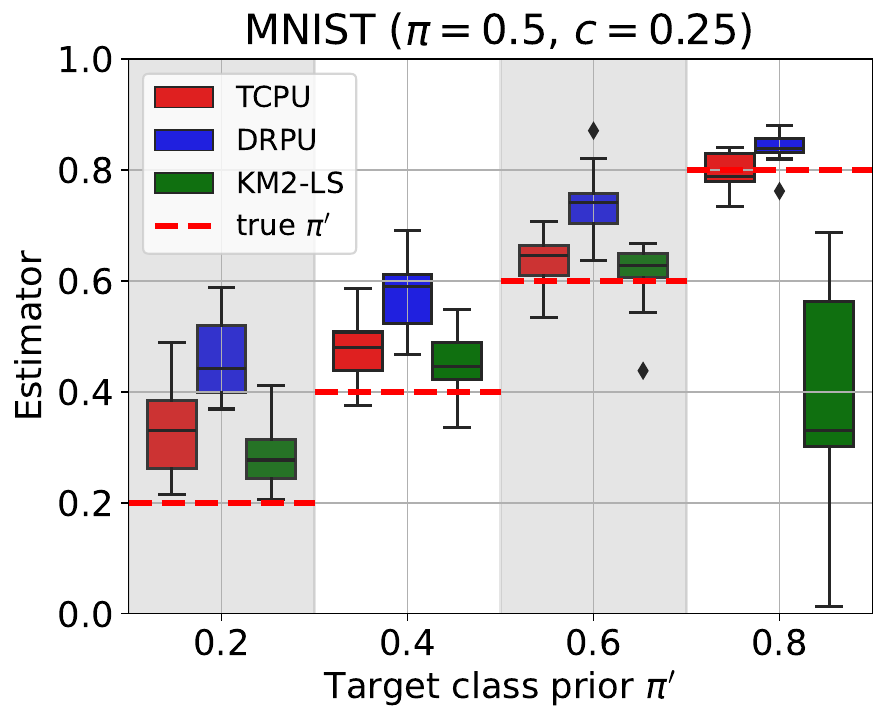} 
      \end{tabular}
    \caption{Distribution of estimators (red line indicates  the true $\pi'$) for image datasets for $c=0.25$.}
    \label{boxplot_image_main}
\end{figure*}

\subsection{Discussion}

In the main part of the paper we present selected results, while the full results can be found in the supplement.

Figures \ref{boxplot_artificial_1_main}-\ref{boxplot_image_main} in the main paper  show the distributions of estimators for synthetic and image datasets when $c=0.25$.
In the supplement, Figures \ref{boxplot_artificial_1}-\ref{boxplot_fashion} show the results for other parameter settings, whereas Tables \ref{tab:errors_image_c025}-\ref{tab:errors_uci2}  present the mean absolute value estimation errors and their standard deviations  for image and UCI datasets. Since our primary interest lies in analyzing estimation errors for small or moderate data samples, we exhibit the boxplots for the case where the source and target datasets consist of randomly chosen samples of 2000 observations. When the total number of observations in the original dataset is less than 4000, we split the data into source and target datasets in equal proportions.

Experiments indicate that the values of $\pi$ and $\pi'$ significantly influence the quality of the $\pi'$ estimation, with the impact differing across methods. 
The boxplots clearly indicate that KM2-LS significantly underestimates the true value of $\pi'$ when $\pi'$ is large. This effect becomes pronounced for low $c$ and small $\pi$ values, which is often the case in many practical applications. A similar problem with the KM2-LS method occurs for UCI datasets, where we observe very large estimation errors for $\pi=0.8$ (see Tables \ref{tab:errors_uci1} and \ref{tab:errors_uci2}).
Importantly, in the discussed situation, the proposed TCPU performs much better and allows for  more accurate estimation of $\pi'$, regardless of the values of $\pi$ and $c$.
The performance of DRPU depends on the particular dataset. For the  synthetic dataset as well as for the FASHION dataset, DRPU performs quite stably. However, for MNIST and CIFAR datasets, we see that DRPU significantly overestimates the true value of $\pi'$, especially for $\pi=0.1$ and $c=0.25$. In these cases, the proposed TCPU method returns estimators that are significantly closer to the theoretical values of $\pi'$.
% For example, in the case of the CIFAR dataset, for $\pi=0.1$ and $c=0.25$, when $\pi'=0.2$, the error for DRPU is almost six times larger than for TPCU  (see Table \ref{tab:errors_image_c025}). 
Tables \ref{tab:errors_image_c025}-\ref{tab:errors_uci2}, containing estimation errors, reveal that the TCPU method returns the smallest estimation errors or errors that are not significantly different from the winning method for most datasets and parameter settings.

Since the proposed TCPU estimator requires $\pi$ as input, we also investigated the impact of the value $\pi$ on the quality of the TCPU estimation of $\pi'$. As expected (Figures \ref{boxplot_artificial_3_main} and \ref{boxplot_artificial_3}), the smallest errors are obtained when the TCPU estimator uses a known $\pi$ (this situation is marked with a blue dashed line). When the estimator $\hat{\pi}$ deviates from the true value of $\pi$, we obtain larger prediction errors for $\pi'$. Furthermore, we observe larger errors for $\pi=0.5$ than for small values of $\pi=0.1$ or $\pi=0.2$.

\begin{figure}[h!]
\centering
    \begin{tabular}{c}
      \includegraphics[width=0.4\textwidth]{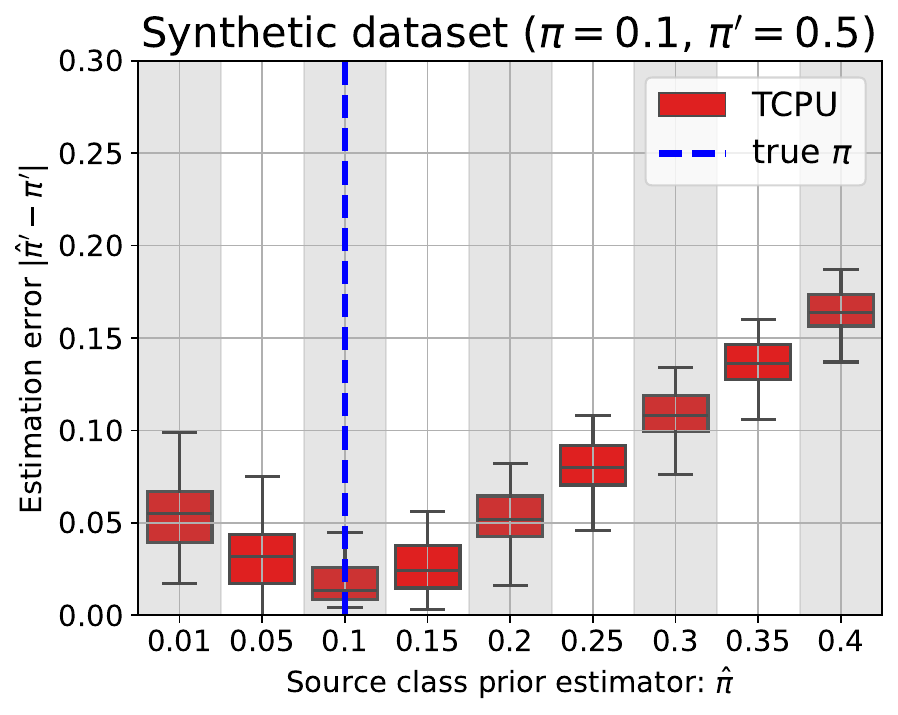} \\
       \includegraphics[width=0.4\textwidth]{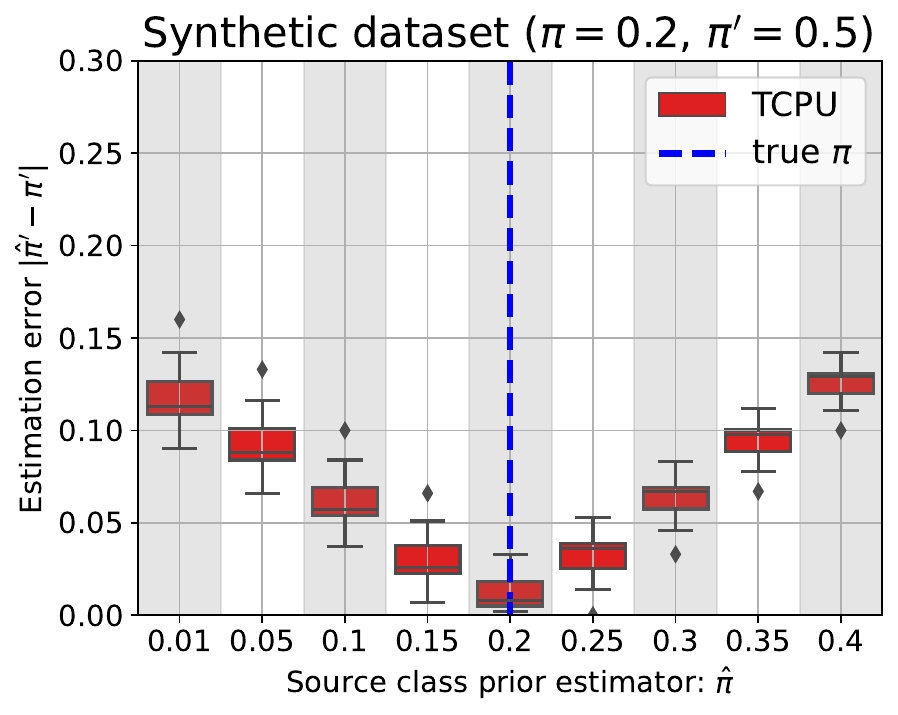}       
      \end{tabular}
    \caption{The impact of $\pi$ estimation on the performance of the TCPU estimator. The boxplot shows estimation errors for TCPU target class prior estimator $|\hpp-\pi'|$, for different source class prior estimators $\hat{\pi}$.}
    \label{boxplot_artificial_3_main}
\end{figure}

\begin{figure}[h!]
\centering
    \begin{tabular}{c}
      \includegraphics[width=0.4\textwidth]{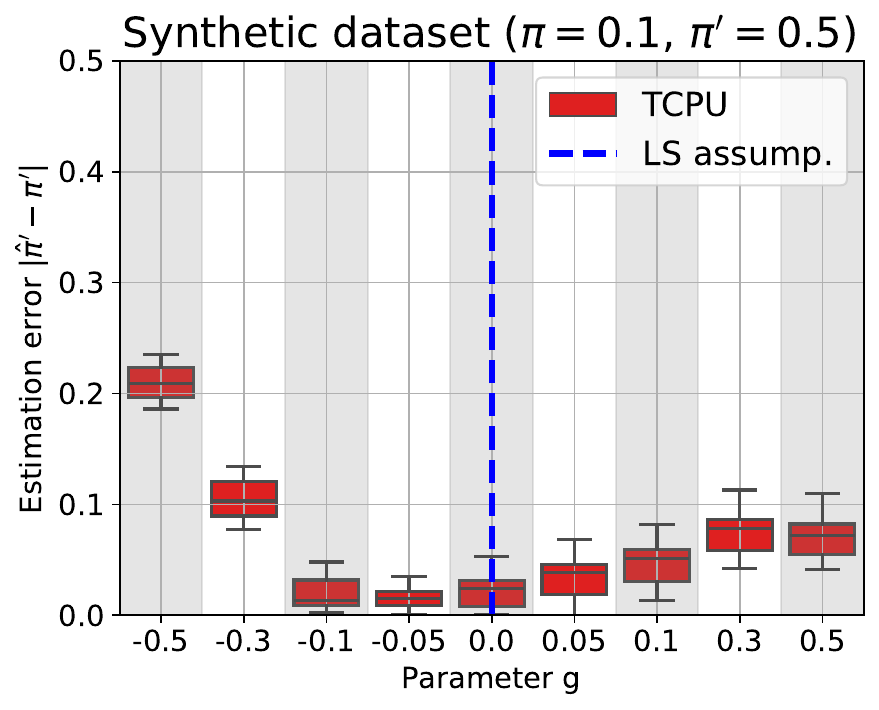} \\
       \includegraphics[width=0.4\textwidth]{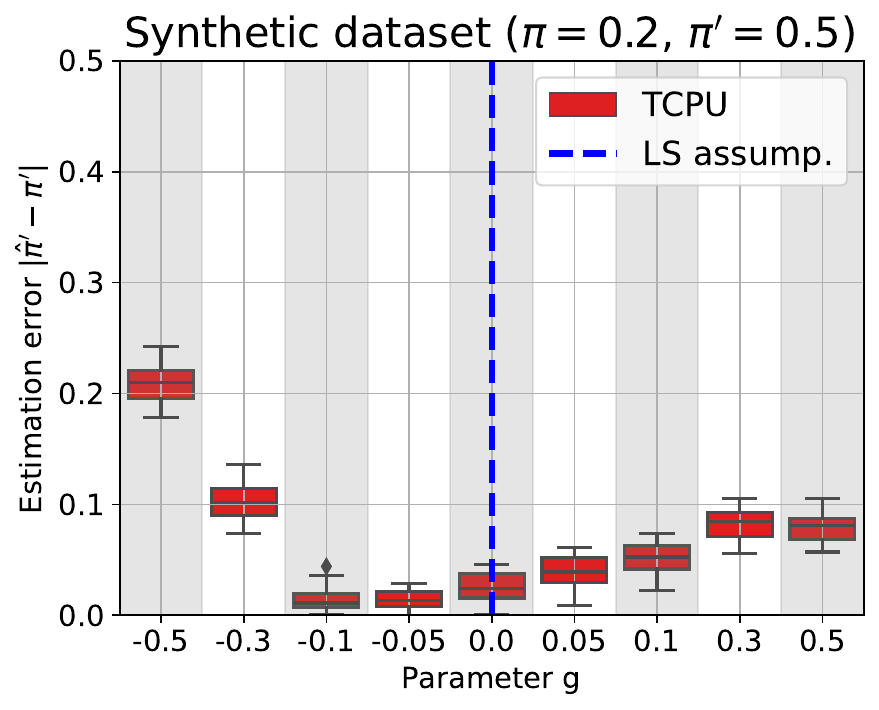}        
      \end{tabular}
    \caption{Robustness to violation of the Label Shift (LS) assumption (\ref{LSassumption}). The blue vertical line corresponds to the situation when the assumption is met, and non-zero values of the parameter $g$ indicate a violation of the assumption.}
    \label{boxplot_artificial_4_main}
\end{figure}

We also analyzed the TCPU method's robustness to violations of the label shift assumption (\ref{LSassumption}), indicating that the feature distribution for a given value of the target variable is the same in both the source and target datasets.
To this end, we modified the synthetic data generation method as follows.
As in previous experiments, negative observations are generated from a 10-dimensional normal distribution $N(0, I)$.
However, we change the way positive observations are generated. In the source dataset, we assume that positive observations are generated from $N(a, I)$, where $a = (1, \ldots, 1)$, whereas in the target dataset, the positive observations are generated from $N(a+g, I)$, where $g$ is the disturbance parameter. The value $g=0$ means that the assumption of invariance of the feature distribution within classes is met, while values other than zero indicate that it is violated. Figure \ref{boxplot_artificial_4_main} in the main paper and Figure \ref{boxplot_artificial_4} in the supplement show the distributions of TCPU estimation errors for different values of the parameter $g$.
As expected, the smallest errors usually occur for $g=0$. It can be seen that for small values of the disturbance parameter $g$, the method is quite robust and returns small errors. Larger disturbances, in line with intuition, lead to a deterioration in the method's performance. A more pronounced deterioration is observed for negative values of $g$ than for  positive ones.

The effect of dataset size on the quality of the estimation is shown in Figure \ref{lineplot_artificial_n} in the supplement. DRPU returns values close to zero for small sample sizes, which results in large estimation errors, while for larger sample sizes it performs comparable or only slightly worse than TCPU. Estimation errors for the KM2-LS method decrease quite slowly with the increase in the sample size. The advantage of TCPU over DRPU and KM2-LS is clear for small or moderate data sizes.

We analyzed the impact of the Gaussian kernel parameter $\tau$ on the estimation quality. The value $\tau=1/p$ corresponds to the default setting in the function \texttt{pairwise$\_$kernels} from the popular \texttt{scikit-learn} library in Python. This choice turned out to be quite reasonable in our experiments.
For example, for the CIFAR data, with $\pi = 0.1$ and $\pi' = 0.8$, the averaged estimation errors $|\pi' - \widehat{\pi}'|$ were $0.04, 0.02, 0.07$ for $\tau = 0.5/p, 1/p, 2/p$, respectively.
We conjecture that there is room for further improvement. For instance, if one could establish the asymptotic variance of TCPU, a natural approach would be to choose $\tau$ that minimizes the estimated asymptotic variance. Minimisation can be performed over quantiles of   empirical distribution of $\{||X_i-X_j||^{-2}, i\neq j\}$, see \cite{Vaz2019}.

% Finally, it is worth noting that TCPU performs computationally fastest for most datasets (Table \ref{tab:times} in the supplement).
% DRPU is the only method considered that requires model training on the source data, which usually leads to longer computation times.
% For example, on MNIST data, the computation time for TCPU is almost 3 times smaller than for DRPU.

Finally, we also analyzed the computation times for the individual methods (Table \ref{tab:times} in the supplement).
DRPU is the only method considered that requires model training on the source data, which usually leads to longer computation times.
Computing our TCPU estimator requires calculation of the kernel matrix for all pairs of observations, and this is the dominant cost of the entire algorithm. The cost of this step grows quadratically with the number of observations in the data. For example, for the synthetic dataset considered in the paper, the computation times were: 2.81, 5.97, 83.58, 331.56 and 3277.25 seconds for sample sizes of 500, 1K, 2K, 5K, and 10K respectively. However, as the times reported in Table \ref{tab:times} show, our method is still faster than the competing methods (DRPU and KM2-LS). Naturally, speed-up can be achieved using multithreaded computation: computing the kernel matrix is easy to parallelize, and such an option is available in popular functions like pairwise kernels in the scikit-learn library. Moreover, in order to reduce quadratic computational costs of U-statistics which are needed to calculate TCPU estimator, one can use incomplete U-statistics based on random sample of pairs sampled from the set of all pairs \citep{Schrab22}. In the manuscript, we focused on smaller datasets (up to 2000 observations) because class-prior estimation is most demanding in that setting.

\section{CONCLUSIONS AND FUTURE WORK}
In this paper, we introduced a novel estimator, TCPU, for estimating the target class prior $\pi'$ under a label shift in the context of positive unlabeled (PU) learning.
Our approach, which leverages distribution matching and kernel embedding, provides an explicitly expressed estimate of the target class prior without estimating posterior probabilities via classifier training.
We proved  that the TCPU estimator is  asymptotically consistent and, in Theorem \ref{th_random},  established a calculable non-asymptotic error bound.
Experimental results on synthetic and real datasets show that TCPU outperforms existing methods, particularly in cases of small source class prior $\pi$ values. 
Importantly, TCPU avoids the problem of significant underestimation or overestimation of the target class prior, which is a drawback of both competing methods DRPU and KM2-LS.
A natural direction for future research is to further investigate the effect of $\pi$ and $\pi'$ estimation on classification accuracy and to develop  testing methods  for occurrence of  label shift what is of interest for the quantification problems.  Also,  generalisation of  the proposed method to not necessarily binary  nominal response  by considering the extension of a PU scenario to a noisy data model (\cite{Zhang2018}) is of interest.

\bibliography{References}

% References follow the acknowledgements. Use an unnumbered third level
% heading for the references section.  Please use the same font
% size for references as for the body of the paper---remember that
% references do not count against your page length total.

% \begin{thebibliography}{}
% \setlength{\itemindent}{-\leftmargin}
% \makeatletter\renewcommand{\@biblabel}[1]{}\makeatother
% \bibitem{} J.~Alspector, B.~Gupta, and R.~B.~Allen (1989).
%     \newblock Performance of a stochastic learning microchip.
%     \newblock In D. S. Touretzky (ed.),
%     \textit{Advances in Neural Information Processing Systems 1}, 748--760.
%     San Mateo, Calif.: Morgan Kaufmann.

% \bibitem{} F.~Rosenblatt (1962).
%     \newblock \textit{Principles of Neurodynamics.}
%     \newblock Washington, D.C.: Spartan Books.

% \bibitem{} G.~Tesauro (1989).
%     \newblock Neurogammon wins computer Olympiad.
%     \newblock \textit{Neural Computation} \textbf{1}(3):321--323.
% \end{thebibliography}

%%%%%%%%%%%%%%%%%%%%%%%%%%%%%%%%%%%%%%%%%%%%%%%%%%%%%%%%%%%%
\clearpage

\section*{Checklist}

% % %%% BEGIN INSTRUCTIONS %%%
% The checklist follows the references. For each question, choose your answer from the three possible options: Yes, No, Not Applicable.  You are encouraged to include a justification to your answer, either by referencing the appropriate section of your paper or providing a brief inline description (1-2 sentences). 
% Please do not modify the questions.  Note that the Checklist section does not count towards the page limit. Not including the checklist in the first submission won't result in desk rejection, although in such case we will ask you to upload it during the author response period and include it in camera ready (if accepted).

% \textbf{In your paper, please delete this instructions block and only keep the Checklist section heading above along with the questions/answers below.}
% % %%% END INSTRUCTIONS %%%

\begin{enumerate}

  \item For all models and algorithms presented, check if you include:
  \begin{enumerate}
    \item A clear description of the mathematical setting, assumptions, algorithm, and/or model. [Yes]
    \item An analysis of the properties and complexity (time, space, sample size) of any algorithm. [Yes]
    \item (Optional) Anonymized source code, with specification of all dependencies, including external libraries. [Yes]
  \end{enumerate}

  \item For any theoretical claim, check if you include:
  \begin{enumerate}
    \item Statements of the full set of assumptions of all theoretical results. [Yes]
    \item Complete proofs of all theoretical results. [Yes]
    \item Clear explanations of any assumptions. [Yes]     
  \end{enumerate}

  \item For all figures and tables that present empirical results, check if you include:
  \begin{enumerate}
    \item The code, data, and instructions needed to reproduce the main experimental results (either in the supplemental material or as a URL). [Yes]
    \item All the training details (e.g., data splits, hyperparameters, how they were chosen). [Yes]
    \item A clear definition of the specific measure or statistics and error bars (e.g., with respect to the random seed after running experiments multiple times). [Yes]
    \item A description of the computing infrastructure used. (e.g., type of GPUs, internal cluster, or cloud provider). [Yes]
  \end{enumerate}

  \item If you are using existing assets (e.g., code, data, models) or curating/releasing new assets, check if you include:
  \begin{enumerate}
    \item Citations of the creator If your work uses existing assets. [Not Applicable]
    \item The license information of the assets, if applicable. [Not Applicable]
    \item New assets either in the supplemental material or as a URL, if applicable. [Not Applicable]
    \item Information about consent from data providers/curators. [Not Applicable]
    \item Discussion of sensible content if applicable, e.g., personally identifiable information or offensive content. [Not Applicable]
  \end{enumerate}

  \item If you used crowdsourcing or conducted research with human subjects, check if you include:
  \begin{enumerate}
    \item The full text of instructions given to participants and screenshots. [Not Applicable]
    \item Descriptions of potential participant risks, with links to Institutional Review Board (IRB) approvals if applicable. [Not Applicable]
    \item The estimated hourly wage paid to participants and the total amount spent on participant compensation. [Not Applicable]
  \end{enumerate}

\end{enumerate}

\appendix
\onecolumn
\aistatstitle{Prior shift estimation for positive unlabeled data through the lens of kernel embedding: \\
Supplementary Materials}

\section{Proofs}

\subsection{Proof of Theorem \ref{th_cons}}

%  Suppose that a kernel $K$ is universal, 
% $P_+\neq P_-$ and $\pi<1.$\\
% (i) Moreover, assume   $\mathbbm{E} K(X,X)<\infty$ for $X\sim P$ and analogous conditions hold  for $X^+\sim P_{X|Y=1}$ and $X'\sim P '$. Then for $N\to\infty$ we have  $\hat{\pi}'\to \pi'$ in probability.\\
% (ii) Assume that $M=\sup_x K(x,x)<\infty.$ Moreover, fix $\alpha\in (0,1)$ and $\delta \leq \exp(-(\sqrt{2}+1)^2/2)$ and let
% \begin{equation}
%     \label{assum}
% N\geq \frac{16 M\log(1/\delta)}{(1-\alpha)^2(1-\pi)^2||\Phi(P_-)-\Phi (P_+)||^2_{\cal H}} \;.
% \end{equation}
%  Then we have
% \begin{equation}
%     \label{th_claim}
% P\left(|\hpp- \pi'|\leq \frac{4 \sqrt{\frac{M}{N}\log(1/\delta)}}{\alpha(1-\pi)||\Phi(P_ -)-\Phi (P_+)||_{\cal H}}\right) \geq 1-3\delta. 
% \end{equation}

% \begin{proof}
The proof of (i) follows from  Chebyshev's inequality applied for \[\Phi(\hat P)=n^{-1}\sum_{i=1}^n \Phi(X_i)\] as well as $\Phi(\hat P_+)$ and $\Phi(\hat P')$: fix $\varepsilon > 0$, then
\begin{equation}
\label{form1}
P(||\Phi(\hat P)- \Phi(P)||_{\cal H}>\varepsilon)\leq \frac{\mathbbm{E}||\Phi(\hat P)  - \Phi(P)||^2_{\cal H}}{\varepsilon ^2}.
\end{equation} 
Now we focus on bounding the numerator in \eqref{form1}. First, we obtain
\begin{equation}
    \label{form2}
\mathbbm{E}||\Phi(\hat P)  - \Phi(P)||^2 = \mathbbm{E}|| \Phi(\hat P)||^2_{\cal H} -
2 \mathbbm{E} <\Phi(\hat P), \Phi( P)>_{\cal H} + || \Phi( P)||^2_{\cal H}.
\end{equation}
Next, we consider the two first terms on the right-hand side of \eqref{form2}. For the first one, we have:
\begin{eqnarray*}
\mathbbm{E} \left|\left|\frac{1}{n}\sum_{i=1}^n \phi(X_i)\right|\right|^2_{\cal H} &=& \frac{1}{n^2} \sum_{i=1}^n \sum_{j=1}^n <\phi(X_i),\phi(X_j)>_{\cal H} 
=\frac{1}{n^2} \sum_{i=1}^n \sum_{j=1}^n K(X_i,X_j)\\
&=& \frac{1}{n^2} \sum_{i=1}^n \mathbbm{E} K(X_i,X_i) + \frac{1}{n^2} \sum_{1 \leq i\neq j \leq n} \mathbbm{E} K(X_i,X_j)\\
&=& \frac{1}{n} \mathbbm{E} K(X_1,X_1) + \left(1-\frac{1}{n}\right) ||\Phi(P)||^2_{\cal H}.
\end{eqnarray*}
Moreover, we have 
\begin{equation}
\label{form3}
    \mathbbm{E} <\Phi(\hat P), \Phi( P)>_{\cal H} = \frac{1}{n} \sum_{i=1}^n \mathbbm{E} < \phi(X_i), \Phi(P)>_{\cal H} = \mathbbm{E} < \phi(X_1), \Phi(P)>_{\cal H}.
\end{equation}
From the reproducing property we obtain $<\phi(x_1) , \Phi(P)>_{\cal H} = [\Phi(P)](x_1) = \mathbbm{E} K(X,x_1).$ Therefore, the right-hand side of \eqref{form3} equals $||\Phi(P)||^2_{\cal H}.$ Finally, \eqref{form2} is $\frac{1}{n} [\mathbbm{E} K(X,X) - ||\Phi(P)||^2_{\cal H}],$ so the right-hand side of \eqref{form1} tends to zero as $n$ goes to infinity. This fact implies that $\Phi(\hat P)$ tends to $\Phi( P)$
in probability. Obviously, the analogous properties hold for 
$\Phi(\hat P _ +)$ and $\Phi(\hat P ')$. In particular,  we have that $||\Phi(\hat P) -\Phi(\hat P_+)||_{\cal H}$ is positive with probability tending to one as $N \to \infty$. It follows from continuity of a norm
and the assumptions that
$$
||\Phi(\hat P) -\Phi(\hat P_+)||_{\cal H} \to_P 
||\Phi( P) -\Phi( P_+)||_{\cal H}=
(1-\pi) ||\Phi( P_ -) -\Phi( P_+)||_{\cal H}.
$$

%Thus, the second equality in (\ref{explicit}) and continuity of a scalar product give
Thus, the second equality in Lemma 2 (main paper) and continuity of a scalar product give
\[\hat\pi' \to_P \, 1-\frac{(1-\pi)<\Phi( P) -\Phi( P_+),\Phi({P'})- \Phi({P_+})>_{\cal H}}{||\Phi( P) -\Phi( {P_+})||_{\cal H}^2} =\pi' .\]
% $E<K(X_i,\cdot, K(X_j,\dot)>=E(K(X,\tilde X)$ for $i\neq j$, $E<K(X_i,\cdot, K(X_i,\cdot)>=E(K(X,X)$ and 
% $<E k(X,\cdot),k(X',\cdot)>= Ek(X,\tilde X)$, where $\tilde X$
% is independent copy of $X$.\\

Next, we focus on the proof of (ii). 
From the first equality in (\ref{explicit}) and the Cauchy–Schwarz inequality we have
% From the first equality in Lemma 2 (main paper) and the Cauchy–Schwarz inequality we have
\begin{align}
    \label{error_term}
|\hpp - \pi '| & = \frac{<\Phi(\hat P) -\Phi(\hat P_+), \Delta - \pi ' [\Phi(\hat P) -\Phi(\hat P_+)]>_{\cal H}}{||\Phi(\hat P) -\Phi(\hat P_+)||_{\cal H}^2} \\
&\leq 
\frac{||\Delta - \pi ' [\Phi(\hat P) -\Phi(\hat P_+)]||_{\cal H}}{||\Phi(\hat P) -\Phi(\hat P_+)||_{\cal H}} \nonumber.
\end{align}
Notice that
$$
\Delta - \pi ' [\Phi(\hat P) -\Phi(\hat P_+)] = (1-\pi ') \Phi(\hat P) - (1-\pi) \Phi(\hat P ') + (\pi ' -\pi) \Phi(\hat P_+)
$$
and 
$$
(1-\pi ') \Phi( P) - (1-\pi) \Phi( P ') + (\pi ' -\pi) \Phi( P_+) = 0,
$$
which imply that 
\begin{align*}
&||\Delta - \pi ' [\Phi(\hat P) -\Phi(\hat P_+)]||_{\cal H}  \leq 
(1-\pi ') ||\Phi(\hat P) - \Phi( P)||_{\cal H}   \\
&+ (1-\pi) || \Phi(\hat P ') - \Phi( P ')||_{\cal H} + |\pi ' -\pi| \times ||\Phi(\hat P_+) - \Phi( P_+)||_{\cal H}.
\end{align*}
% Now we use Lemma \ref{concentration2} (given below) and consider the event on which all three inequalities in this lemma hold. 
Now we use Lemma \ref{Lemma3} (given below) and consider the event on which all three inequalities in this lemma hold. 
In this case 
\begin{equation}
    \label{fff}
||\Delta - \pi ' [\Phi(\hat P) -\Phi(\hat P_+)]||_{\cal H} \leq
4 \sqrt{\frac{M}{N} \log(1/\delta)} 
\end{equation}
and 
$$
||\Phi(\hat P) -\Phi(\hat P_+)||_{\cal H} \geq 
||\Phi( P) -\Phi( P_+)||_{\cal H} - ||\Phi(\hat P) - \Phi( P)||_{\cal H} -||\Phi(\hat P_+) - \Phi( P_+)||_{\cal H},
$$
which again can be bounded by 
$$
(1-\pi)||\Phi( P_-) - \Phi( P_+)||_{\cal H} - 4 \sqrt{\frac{M}{N} \log(1/\delta)}.
$$
From the assumption \eqref{assum} the above expression is not smaller than $\alpha (1-\pi)||\Phi( P_-) - \Phi( P_+)||_{\cal H}.$ In particular, we have that  $||\Phi(\hat P) -\Phi(\hat P_+)||_{\cal H}$ is positive with high probability. Since the numerator and the denominator on the right-hand side of \eqref{error_term} are bounded, the proof of (ii) is finished.
% \end{proof}

\subsection{Proof of Theorem \ref{th_random}}
%     \label{th_random}
% Assume that $M=\sup_x K(x,x)<\infty.$ 
%  Then for any $\delta \leq \exp(-(\sqrt{2}+1)^2/2)$ we have that
% \begin{equation}
%     \label{th_claim_random}
% P\left(|\hpp- \pi'|\leq \frac{4 \sqrt{\frac{M}{N}\log(1/\delta)}}{|| \Phi(\hat P)-\Phi ( \hat P_+)||_{\cal H}}\right) \geq 1-3\delta. 
% \end{equation}
% \begin{proof}
% The proof is even simpler than the proof of Theorem \ref{th_cons} and involves bounding the numerator on the right-hand side of \eqref{error_term} done in \eqref{fff}.
The proof is even simpler than the proof of Theorem 1 and involves bounding the numerator on the right-hand side of \eqref{error_term} done in \eqref{fff}.
% \end{proof}

\subsection{Lemma 3 and its proof}

\begin{lemma}
\label{Lemma3}    
Suppose that $M=\sup_x K(x,x)<\infty$ and $\delta \leq \exp(-(\sqrt{2}+1)^2/2) $ is arbitrary. Then with probability at least $1-3\delta$ the following three inequalities simultaneously hold
\begin{align*}
& ||\Phi(\hat P) - \Phi( P)||_{\cal H} \leq 2 \sqrt{\frac{M}{n} \log(1/\delta)}\:, &
||\Phi(\hat P') - \Phi( P')||_{\cal H} \leq 2 \sqrt{\frac{M}{n'} \log(1/\delta)} \:,\\
&||\Phi(\hat P_+) - \Phi( P_+)||_{\cal H} \leq 2 \sqrt{\frac{M}{m} \log(1/\delta)} \:. & \\ 
\end{align*}
\end{lemma}

\begin{proof}
  Fix $\delta \leq \exp(-(\sqrt{2}+1)^2/2).$ We focus on the first inequality in the lemma. The initial step is McDiarmid's inequality \citep{McDiarmid1989} applied for the function
  $$
F(X_1,\ldots, X_n) = ||\Phi(\hat P) - \Phi( P)||_{\cal H}=
\left|\left|\frac{1}{n}\sum_{i=1}^n \phi(X_i) - \Phi( P)\right|\right|_{\cal H}.
  $$
We are to bound a difference
$$
| F(X_1,\ldots, X_n) - F(X_1,\ldots, X_i',\ldots , X_n)| \leq \frac{1}{n} ||\phi(X_i) - \phi( X_i')||_{\cal H} \leq \frac{2\sqrt{M}}{n}\;.
$$
Therefore, we obtain that with probability at least $1-\delta$
$$
||\Phi(\hat P) - \Phi( P)||_{\cal H} \leq
\mathbbm{E}||\Phi(\hat P) - \Phi( P)||_{\cal H} + \sqrt{\frac{2M \log(1/\delta)}{n}}.
$$
Applying Jensen's inequality to the  expectation above, we get  
$$
\mathbbm{E}||\Phi(\hat P) - \Phi( P)||_{\cal H} \leq \sqrt{\mathbbm{E}||\Phi(\hat P) - \Phi( P)||_{\cal H}^2 }.
$$
The expression under the square root has been already considered in the Proof of Theorem \ref{th_cons}  and was bounded  there by $\frac{1}{n} [\mathbbm{E} K(X,X) - ||\Phi(P)||^2_{\cal H}],$ which is smaller than $\frac{M}{n}\:.$ Note that for the chosen $\delta$ we have
$1+\sqrt{2\log(1/\delta)} \leq 2 \sqrt{\log(1/\delta)},$ which finishes the proof.
\end{proof}
The similar concentration inequalities are proven, for instance, in  \cite[Proposition A.1 and Remark~A.2]{Tols2017}.

% \section{Datasets}

% \begin{table}[ht!]
% \begin{center}
% \caption{Statistics of the considered data sets.}
% \label{Tab:datasets}
% \begin{tabular}{l|lllll}
% \toprule 
%                 Dataset    & $n$ &   $p$ & $P(Y=1)$ & positives & type\\
%                 \midrule
%                   Diabetes &   768  &  8  &0.35 &    268& tabular\\
%                   Spambase &  4601  & 57  &0.39 &   1813& tabular\\               
%                   Segment  & 2310   &19  &0.14  &   330& tabular\\
%             Waveform  & 5000   &40  &0.34  &  1692& tabular\\
%                     Yeast  & 1484   & 8  &0.31  &   463& tabular\\
%                   Vehicle  &  846  & 18  &0.26 &    218& tabular\\
%                 Banknote  &  1347  & 4  &0.44 &    610& tabular\\
% \midrule
%                   CIFAR10  & 50000   &  - & 0.4 & 20000   & images\\
%                   MNIST  &  60000  & -  & 0.49&  29492  & images\\
%                   Fashion  & 60000   & -  & 0.5 & 30000    & images\\                  
% \bottomrule
% \end{tabular}
% \end{center}
% \end{table}

\clearpage

\section{Computational times}
All experiments are executed in an isolated manner on the
same machine with access to 32Gb RAM and 8 cores
of an  Intel(R) Xeon(R) CPU E31270 3.40GHz.
\begin{table}[ht!]
\begin{center}
\caption{{\bf Computational times (in seconds)}, for example parameter setting $\pi=0.2$ and $\pi'=0.6$.}
\label{tab:times}
%\vspace*{0.5 cm}
\begin{tabular}{l|lll}
\toprule 
Dataset &	TCPU  &	DRPU   &	KM2-LS \\
\midrule
Synthetic &	83.582 $\pm$ 0.473 &	\bf  48.72 $\pm$ 0.213 &	162.818 $\pm$ 1.855 \\
\midrule
MNIST &	\bf  155.415 $\pm$ 2.28 &	440.129 $\pm$ 18.567 &	280.33 $\pm$ 5.337 \\
CIFAR &	\bf  78.873 $\pm$ 0.997 &	151.749 $\pm$ 5.435 &	160.437 $\pm$ 1.658 \\
Fashion &	\bf  38.371 $\pm$ 0.319 &	48.768 $\pm$ 0.291 &	80.313 $\pm$ 0.384 \\
\midrule
Diabetes &	\bf  0.043 $\pm$ 0.009 &	0.293 $\pm$ 0.014 &	0.173 $\pm$ 0.019 \\
Spambase &	\bf  0.015 $\pm$ 0.004 &	0.023 $\pm$ 0.005 &	0.102 $\pm$ 0.018 \\
Segment &	0.052 $\pm$ 0.012 &	\bf  0.039 $\pm$ 0.015 &	0.194 $\pm$ 0.033 \\
Waveform &	0.07 $\pm$ 0.004 &	0.087 $\pm$ 0.009 &	\bf  0.036 $\pm$ 0.006 \\
Vehicle &	\bf  0.073 $\pm$ 0.015 &	0.091 $\pm$ 0.015 &	0.318 $\pm$ 0.016 \\
Yeast &	\bf  0.117 $\pm$ 0.016 &	0.22 $\pm$ 0.02 &	0.249 $\pm$ 0.018 \\
Banknote &	0.041 $\pm$ 0.011 &	\bf  0.026 $\pm$ 0.006 &	0.16 $\pm$ 0.033 \\
\bottomrule
\end{tabular}
\end{center}
\end{table}

\section{Technical details about the DRPU method}

The DRPU method  requires learning a parametric model. As in the original work, we used
5-layer Multi-Layer Perceptron  (MLP) : $p-300-300-300-1$ (with $p$ being the size of the feature vector) with ReLU activation, and trained by Adam with the default momentum parameters $\beta_1 = 0.9$ and $\beta_2 = 0.999$ and 
 $\ell_2$ regularization parameter
$5 \times 10^{3}$. Training was performed for $1000$ epochs with the batch size $100$ and learning rate $2\times 10^{-5}$.
The DRPU method requires setting the hyperparameter $\alpha$ for non-negative correction. In the experiments, for image datasets we used the values proposed in \cite{Nakajima23}: $\alpha=0.475, 0.425, 0.6$, for MNIST, CIFAR10 and FashionMNIST, respectively. For the remaining datasets, we set the value $\alpha=\hat{\pi}$, where $\hat{\pi}$ is the KM estimator, using the fact that $\alpha$ should satisfy $0\leq \alpha\leq \pi$. This is usually a sensible choice, for image datasets there is usually no significant difference between $\alpha=\hat{\pi}$ and the default values given above. 

\section{Controlling the size of the source data and the labeling frequency.}
We follow the procedure described e.g. in \cite{MielniczukWawrzenczyk2024} 
to control the size of the source data and the labeling frequency $c$, which represents the percentage of labeled observations among all positive observations.
Let $n$ denote the total number of observations in the source dataset. We pick $c\times\pi\times n$ observations from positive class and $(1-c)\times n$ from the whole data set. Thus $c\times \pi\times n+\pi\times (1-c)\times n = \pi\times n$ is an expected number of observations from the positive class in the sample and fraction $c$ of them will be labeled on average. In order to ensure that the chosen sample has size equal to $n$, both sizes should be increased $A = (1-c(1-\pi))^{-1}$ times i.e. size of chosen labeled sample should be $A \times c \times \pi \times n$ and $A \times (1 - c) \times n$ for the unlabeled one.
Note that both samples are not necessarily disjoint.

\clearpage

\begin{figure}[ht!]
\centering
    \begin{tabular}{c c}
    \includegraphics[width=0.45\textwidth]{figures1/exp1_est_art1_c0.25_n_sample2000_pi_train0.1.pdf}  &
      \includegraphics[width=0.45\textwidth]{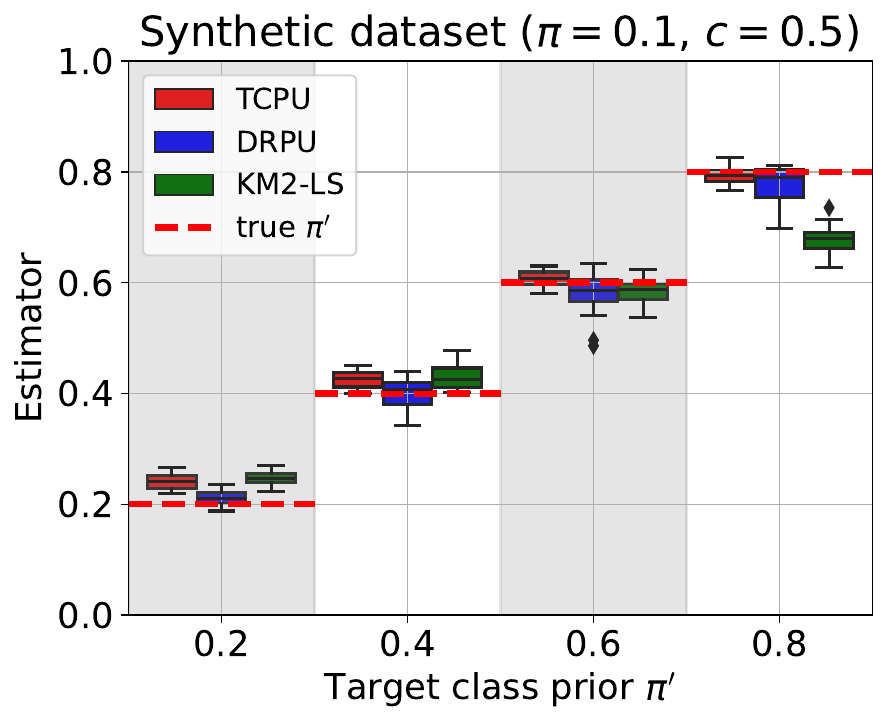}  \\
          \includegraphics[width=0.45\textwidth]{figures1/exp1_est_art1_c0.25_n_sample2000_pi_train0.2.pdf}&
      \includegraphics[width=0.45\textwidth]{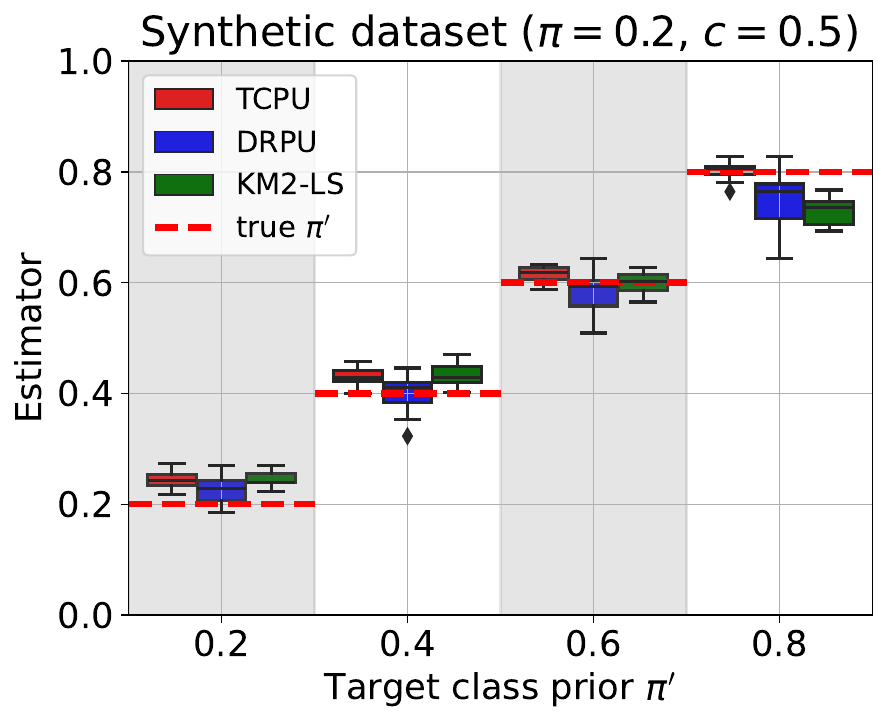}  \\
          \includegraphics[width=0.45\textwidth]{figures1/exp1_est_art1_c0.25_n_sample2000_pi_train0.5.pdf}&
      \includegraphics[width=0.45\textwidth]{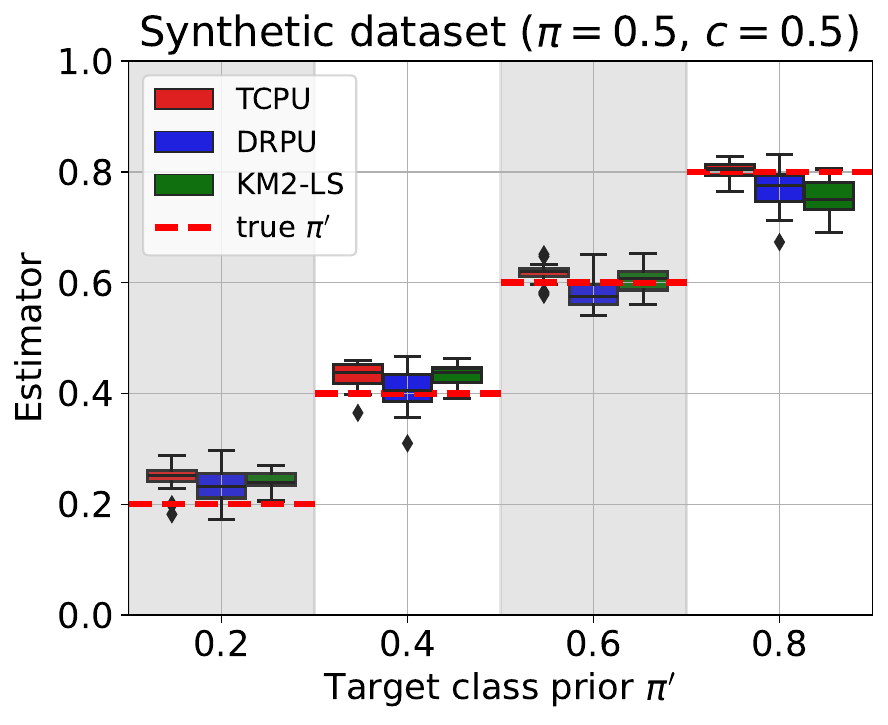}  \\
      \end{tabular}
    \caption{Distribution of estimators (red line indicates  the true $\pi'$). Size of the source data and the target data is $2000$.}
    \label{boxplot_artificial_1}
\end{figure}

\begin{figure}[ht!]
\centering
    \begin{tabular}{c c}
    \includegraphics[width=0.45\textwidth]{figures1/exp1_est_MNIST_c0.25_n_sample2000_pi_train0.1.pdf}  &
      \includegraphics[width=0.45\textwidth]{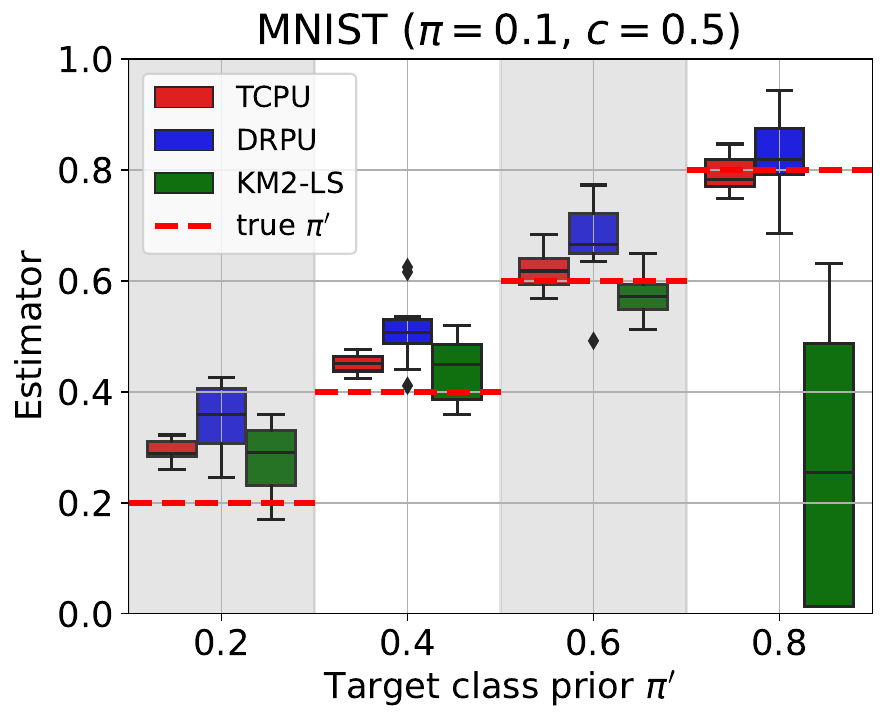}  \\
    \includegraphics[width=0.45\textwidth]{figures1/exp1_est_MNIST_c0.25_n_sample2000_pi_train0.2.pdf}  &
      \includegraphics[width=0.45\textwidth]{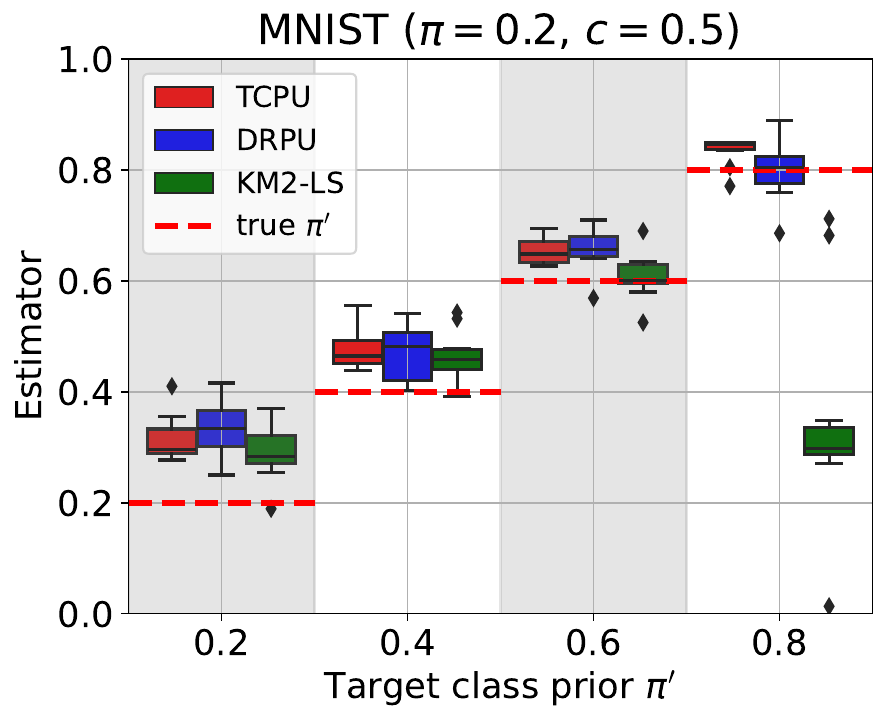}        \\
    \includegraphics[width=0.45\textwidth]{figures1/exp1_est_MNIST_c0.25_n_sample2000_pi_train0.5.pdf}  &
      \includegraphics[width=0.45\textwidth]{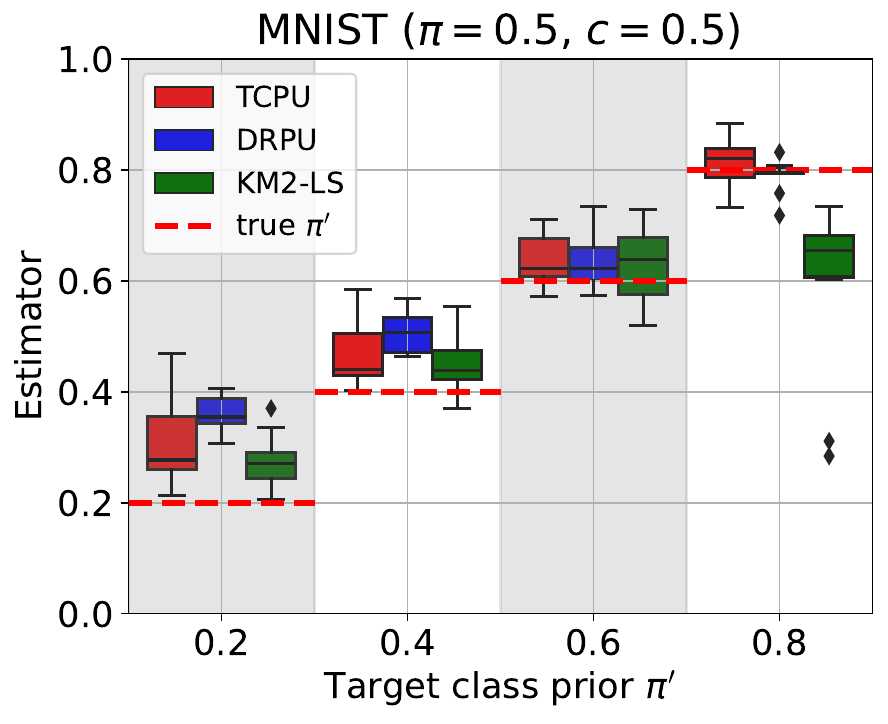}        \\
      \end{tabular}
    \caption{Distribution of estimators (red line indicates  the true $\pi'$) for MNIST dataset.}
    \label{boxplot_mnist}
\end{figure}

\begin{figure}[ht!]
\centering
    \begin{tabular}{c c}
    \includegraphics[width=0.45\textwidth]{figures1/exp1_est_CIFAR10_c0.25_n_sample2000_pi_train0.1.pdf}  &
      \includegraphics[width=0.45\textwidth]{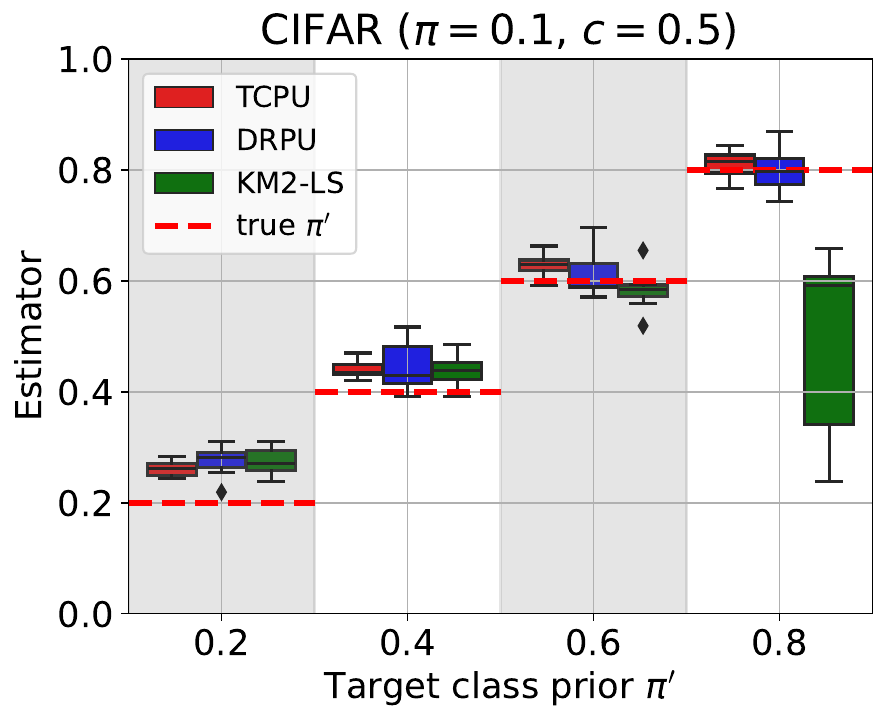}  \\
    \includegraphics[width=0.45\textwidth]{figures1/exp1_est_CIFAR10_c0.25_n_sample2000_pi_train0.2.pdf}  &
      \includegraphics[width=0.45\textwidth]{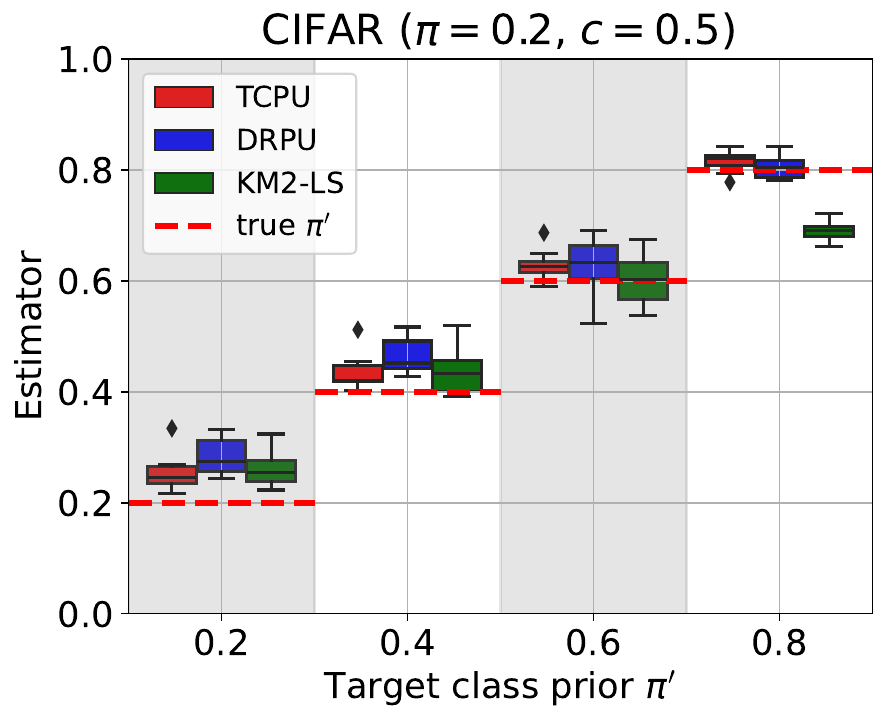}        \\
    \includegraphics[width=0.45\textwidth]{figures1/exp1_est_CIFAR10_c0.25_n_sample2000_pi_train0.5.pdf}  &
      \includegraphics[width=0.45\textwidth]{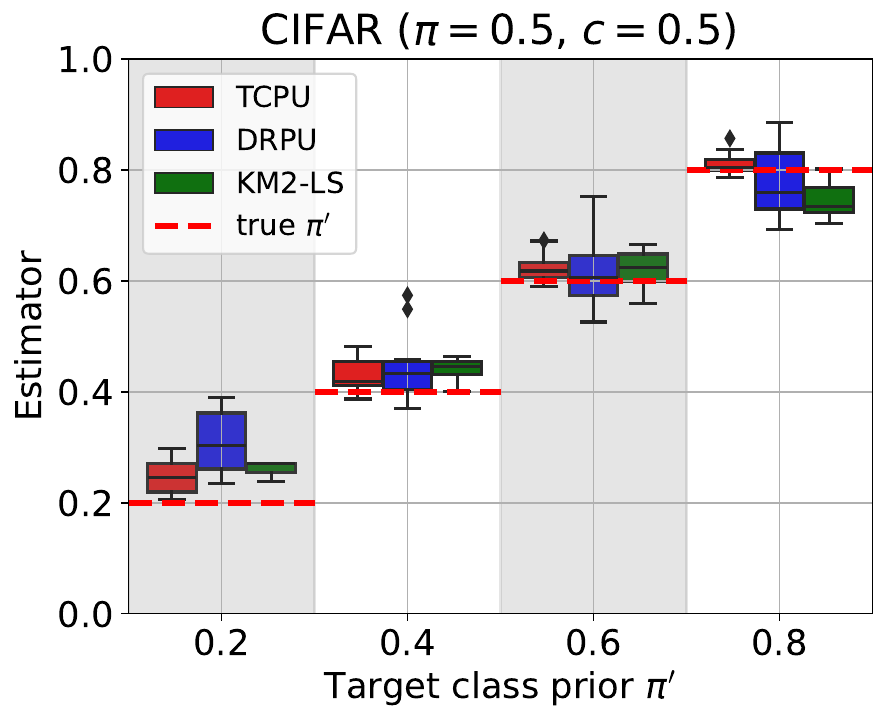}        \\
      \end{tabular}
    \caption{Distribution of estimators (red line indicates  the true $\pi'$) for CIFAR dataset.}
    \label{boxplot_cifar}
\end{figure}

\begin{figure}[ht!]
\centering
    \begin{tabular}{c c}
    \includegraphics[width=0.45\textwidth]{figures1/exp1_est_Fashion_c0.25_n_sample2000_pi_train0.1.pdf}  &
      \includegraphics[width=0.45\textwidth]{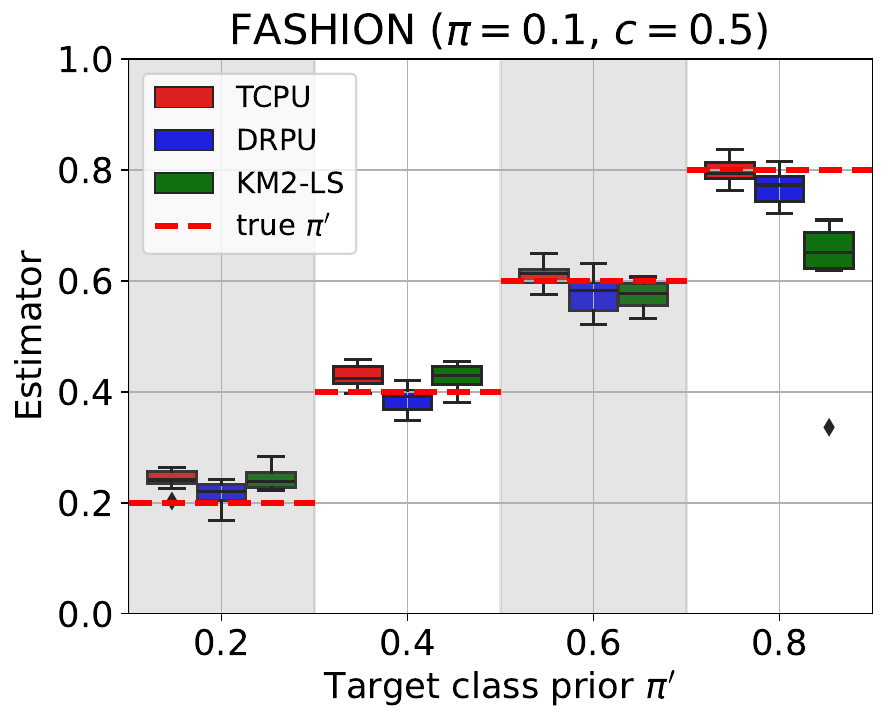}  \\
    \includegraphics[width=0.45\textwidth]{figures1/exp1_est_Fashion_c0.25_n_sample2000_pi_train0.2.pdf}  &
      \includegraphics[width=0.45\textwidth]{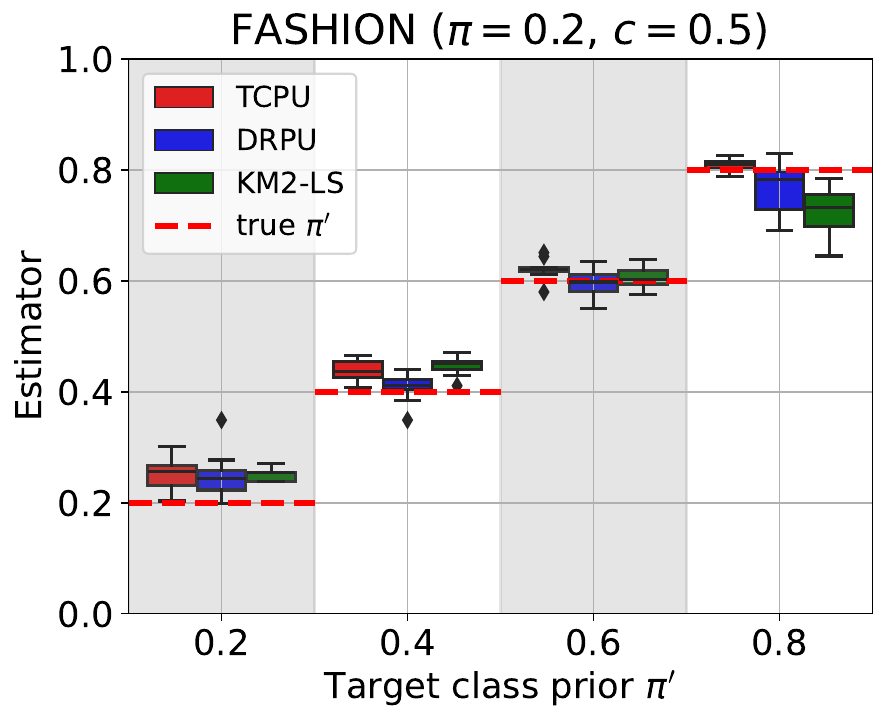}        \\
    \includegraphics[width=0.45\textwidth]{figures1/exp1_est_Fashion_c0.25_n_sample2000_pi_train0.5.pdf}  &
      \includegraphics[width=0.45\textwidth]{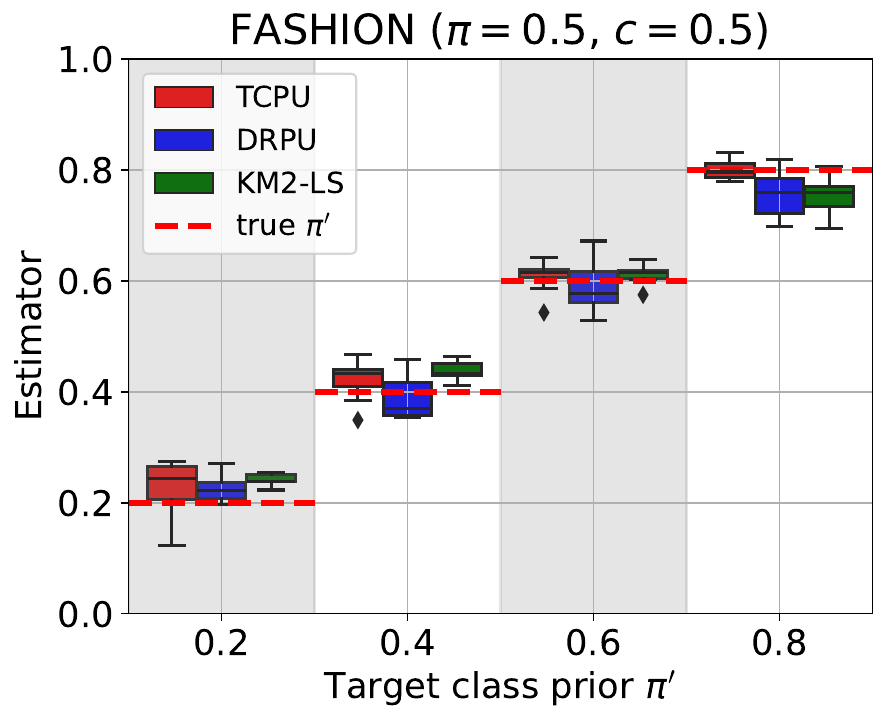}        \\
      \end{tabular}
    \caption{Distribution of estimators (red line indicates  the true $\pi'$) for FASHION dataset.}
    \label{boxplot_fashion}
\end{figure}

\clearpage

\begin{table}[ht!]
\setlength{\heavyrulewidth}{1.5pt}
\begin{center}
\caption{{\bf Estimation errors} for image datasets for $c=0.25$. The winning method and the method whose error does not differ from the winning method by more than $0.01$ are bolded.}
\label{tab:errors_image_c025}
%\vspace*{0.5 cm}
\begin{tabular}{l|l|l|lll}
\toprule
Dataset &	$\pi$ &	$\pi'$ &	TCPU  &	DRPU    &	KM2-LS  \\
\midrule
\multirow{12}{*}{MNIST}&\multirow{4}{*}{0.1}&	0.2 &	0.109 $\pm$ 0.021 &	0.507 $\pm$ 0.027 &	\bf 0.095 $\pm$ 0.026 \\
& &	0.4 &	\bf 0.044 $\pm$ 0.016 &	0.371 $\pm$ 0.019 &	0.065 $\pm$ 0.014 \\
& &	0.6 &	\bf 0.037 $\pm$ 0.008 &	0.209 $\pm$ 0.017 &	0.332 $\pm$ 0.054 \\
& &	0.8 &	\bf 0.058 $\pm$ 0.011 &	0.083 $\pm$ 0.014 &	0.695 $\pm$ 0.045 \\
\cline{2-6}
&\multirow{4}{*}{0.2} &	0.2 &	0.116 $\pm$ 0.015 &	0.207 $\pm$ 0.016 &	\bf 0.089 $\pm$ 0.02 \\
& &	0.4 &	0.067 $\pm$ 0.013 &	0.172 $\pm$ 0.013 &	\bf 0.056 $\pm$ 0.015 \\
& &	0.6 &	\bf 0.034 $\pm$ 0.011 &	0.094 $\pm$ 0.016 &	0.062 $\pm$ 0.015 \\
& &	0.8 &	\bf 0.024 $\pm$ 0.006 &	0.047 $\pm$ 0.015 &	0.702 $\pm$ 0.033 \\
\cline{2-6}
&\multirow{4}{*}{0.5} &	0.2 &	0.13 $\pm$ 0.027 &	0.263 $\pm$ 0.023 &	\bf 0.083 $\pm$ 0.019 \\
& &	0.4 &	0.082 $\pm$ 0.016 &	0.172 $\pm$ 0.021 &	\bf 0.064 $\pm$ 0.012 \\
& &	0.6 &	\bf 0.05 $\pm$ 0.009 &	0.143 $\pm$ 0.02 &	\bf 0.053 $\pm$ 0.013 \\
& &	0.8 &	\bf 0.032 $\pm$ 0.005 &	0.046 $\pm$ 0.006 &	0.396 $\pm$ 0.062 \\
\bottomrule
\multirow{12}{*}{CIFAR}&\multirow{4}{*}{0.1} &	0.2 &	\bf 0.06 $\pm$ 0.005 &	0.334 $\pm$ 0.029 &	\bf 0.059 $\pm$ 0.007 \\
& &	0.4 &	\bf 0.035 $\pm$ 0.003 &	0.262 $\pm$ 0.038 &	\bf 0.029 $\pm$ 0.006 \\
& &	0.6 &	\bf 0.027 $\pm$ 0.004 &	0.183 $\pm$ 0.037 &	0.104 $\pm$ 0.01 \\
& &	0.8 &	\bf 0.022 $\pm$ 0.006 &	0.073 $\pm$ 0.01 &	0.666 $\pm$ 0.025 \\
\cline{2-6}
&\multirow{4}{*}{0.2} &	0.2 &	\bf 0.046 $\pm$ 0.013 &	0.126 $\pm$ 0.016 &	0.057 $\pm$ 0.013 \\
& &	0.4 &	\bf 0.033 $\pm$ 0.01 &	0.101 $\pm$ 0.019 &	\bf 0.036 $\pm$ 0.009 \\
& &	0.6 &	\bf 0.026 $\pm$ 0.008 &	0.07 $\pm$ 0.014 &	0.074 $\pm$ 0.011 \\
& &	0.8 &	\bf 0.03 $\pm$ 0.005 &	0.049 $\pm$ 0.011 &	0.39 $\pm$ 0.053 \\
\cline{2-6}
&\multirow{4}{*}{0.5} &	0.2 &	\bf 0.06 $\pm$ 0.012 &	0.169 $\pm$ 0.031 &	\bf 0.062 $\pm$ 0.007 \\
& &	0.4 &	\bf 0.04 $\pm$ 0.009 &	0.115 $\pm$ 0.021 &	\bf 0.04 $\pm$ 0.006 \\
& &	0.6 &	\bf 0.03 $\pm$ 0.008 &	0.067 $\pm$ 0.016 &	\bf 0.027 $\pm$ 0.008 \\
& &	0.8 &	\bf 0.02 $\pm$ 0.005 &	0.041 $\pm$ 0.007 &	0.135 $\pm$ 0.008 \\
\bottomrule
\multirow{12}{*}{FASHION}&\multirow{4}{*}{0.1} &	0.2 &	\bf 0.04 $\pm$ 0.008 &	0.149 $\pm$ 0.024 &	\bf 0.042 $\pm$ 0.008 \\
& &	0.4 &	\bf 0.03 $\pm$ 0.006 &	0.085 $\pm$ 0.015 &	\bf 0.032 $\pm$ 0.004 \\
& &	0.6 &	\bf 0.02 $\pm$ 0.004 &	0.055 $\pm$ 0.011 &	0.074 $\pm$ 0.011 \\
& &	0.8 &	\bf 0.018 $\pm$ 0.006 &	0.043 $\pm$ 0.01 &	0.56 $\pm$ 0.036 \\
\cline{2-6}
&\multirow{4}{*}{0.2} &	0.2 &	0.054 $\pm$ 0.009 &	\bf 0.042 $\pm$ 0.008 &	0.053 $\pm$ 0.007 \\
& &	0.4 &	\bf 0.039 $\pm$ 0.008 &	0.047 $\pm$ 0.012 &	\bf 0.037 $\pm$ 0.008 \\
& &	0.6 &	\bf 0.025 $\pm$ 0.007 &	\bf 0.024 $\pm$ 0.006 &	0.042 $\pm$ 0.007 \\
& &	0.8 &	\bf 0.019 $\pm$ 0.004 &	\bf 0.025 $\pm$ 0.006 &	0.289 $\pm$ 0.048 \\
\cline{2-6}
&\multirow{4}{*}{0.5} &	0.2 &	\bf 0.042 $\pm$ 0.008 &	0.084 $\pm$ 0.011 &	\bf 0.042 $\pm$ 0.007 \\
& &	0.4 &	\bf 0.03 $\pm$ 0.007 &	0.045 $\pm$ 0.012 &	\bf 0.037 $\pm$ 0.007 \\
& &	0.6 &	\bf 0.024 $\pm$ 0.007 &	0.044 $\pm$ 0.01 &	\bf 0.016 $\pm$ 0.005 \\
& &	0.8 &	\bf 0.024 $\pm$ 0.006 &	\bf 0.032 $\pm$ 0.006 &	0.119 $\pm$ 0.024 \\
\bottomrule
\end{tabular}
\end{center}
\end{table}

\begin{table}[ht!]
\setlength{\heavyrulewidth}{1.5pt}
\begin{center}
\caption{{\bf Estimation errors} for image datasets for $c=0.5$. The winning method and the method whose error does not differ from the winning method by more than $0.01$ are bolded.}
\label{tab:errors_image_c05}
%\vspace*{0.5 cm}
\begin{tabular}{l|l|l|lll}
\toprule 
Dataset &	$\pi$ &	$\pi'$ &	TCPU  &	DRPU    &	KM2-LS  \\
\bottomrule
\multirow{12}{*}{MNIST}&\multirow{4}{*}{0.1} &	0.2 &	\bf 0.094 $\pm$ 0.006 &	0.148 $\pm$ 0.019 &	\bf 0.085 $\pm$ 0.016 \\
& &	0.4 &	\bf 0.05 $\pm$ 0.005 &	0.114 $\pm$ 0.02 &	\bf 0.059 $\pm$ 0.012 \\
& &	0.6 &	\bf 0.03 $\pm$ 0.007 &	0.094 $\pm$ 0.014 &	\bf 0.039 $\pm$ 0.008 \\
& &	0.8 &	\bf 0.029 $\pm$ 0.004 &	0.06 $\pm$ 0.016 &	0.538 $\pm$ 0.076 \\
\cline{2-6}
&\multirow{4}{*}{0.2} &	0.2 &	0.116 $\pm$ 0.013 &	0.133 $\pm$ 0.015 &	\bf 0.092 $\pm$ 0.014 \\
& &	0.4 &	0.078 $\pm$ 0.011 &	\bf 0.069 $\pm$ 0.015 &	\bf 0.065 $\pm$ 0.013 \\
& &	0.6 &	0.053 $\pm$ 0.007 &	0.064 $\pm$ 0.008 &	\bf 0.029 $\pm$ 0.009 \\
& &	0.8 &	\bf 0.039 $\pm$ 0.004 &	\bf 0.042 $\pm$ 0.011 &	0.45 $\pm$ 0.061 \\
\cline{2-6}
&\multirow{4}{*}{0.5} &	0.2 &	0.107 $\pm$ 0.023 &	0.16 $\pm$ 0.01 &	\bf 0.075 $\pm$ 0.015 \\
& &	0.4 &	\bf 0.068 $\pm$ 0.017 &	0.108 $\pm$ 0.012 &	\bf 0.059 $\pm$ 0.015 \\
& &	0.6 &	\bf 0.047 $\pm$ 0.013 &	\bf 0.045 $\pm$ 0.015 &	0.065 $\pm$ 0.012 \\
& &	0.8 &	0.037 $\pm$ 0.008 &	\bf 0.019 $\pm$ 0.008 &	0.209 $\pm$ 0.048 \\
\bottomrule
\multirow{12}{*}{CIFAR}&\multirow{4}{*}{0.1} &	0.2 &	\bf 0.061 $\pm$ 0.004 &	0.076 $\pm$ 0.008 &	0.073 $\pm$ 0.007 \\
& &	0.4 &	\bf 0.04 $\pm$ 0.005 &	0.046 $\pm$ 0.013 &	\bf 0.042 $\pm$ 0.008 \\
& &	0.6 &	\bf 0.03 $\pm$ 0.005 &	0.03 $\pm$ 0.009 &	\bf 0.029 $\pm$ 0.007 \\
& &	0.8 &	\bf 0.021 $\pm$ 0.004 &	0.033 $\pm$ 0.007 &	0.302 $\pm$ 0.051 \\
\cline{2-6}
&\multirow{4}{*}{0.2} &	0.2 &	\bf 0.053 $\pm$ 0.01 &	0.082 $\pm$ 0.01 &	\bf 0.058 $\pm$ 0.009 \\
& &	0.4 &	\bf 0.035 $\pm$ 0.01 &	0.065 $\pm$ 0.01 &	\bf 0.039 $\pm$ 0.012 \\
& &	0.6 &	\bf 0.031 $\pm$ 0.007 &	0.047 $\pm$ 0.009 &	\bf 0.036 $\pm$ 0.007 \\
& &	0.8 &	\bf 0.022 $\pm$ 0.003 &	\bf 0.018 $\pm$ 0.004 &	0.11 $\pm$ 0.005 \\
\cline{2-6}
&\multirow{4}{*}{0.5} &	0.2 &	\bf 0.049 $\pm$ 0.01 &	0.109 $\pm$ 0.017 &	0.061 $\pm$ 0.004 \\
& &	0.4 &	\bf 0.034 $\pm$ 0.008 &	0.057 $\pm$ 0.017 &	\bf 0.042 $\pm$ 0.006 \\
& &	0.6 &	\bf 0.026 $\pm$ 0.008 &	0.054 $\pm$ 0.015 &	\bf 0.033 $\pm$ 0.006 \\
& &	0.8 &	\bf 0.016 $\pm$ 0.006 &	0.061 $\pm$ 0.009 &	0.056 $\pm$ 0.01 \\
\bottomrule
\multirow{12}{*}{FASHION}&\multirow{4}{*}{0.1} &	0.2 &	0.041 $\pm$ 0.005 &	\bf 0.022 $\pm$ 0.005 &	0.045 $\pm$ 0.006 \\
& &	0.4 &	\bf 0.029 $\pm$ 0.006 &	\bf 0.021 $\pm$ 0.006 &	0.031 $\pm$ 0.005 \\
& &	0.6 &	\bf 0.019 $\pm$ 0.004 &	0.033 $\pm$ 0.008 &	\bf 0.028 $\pm$ 0.007 \\
& &	0.8 &	\bf 0.018 $\pm$ 0.004 &	0.035 $\pm$ 0.008 &	0.171 $\pm$ 0.033 \\
\cline{2-6}
&\multirow{4}{*}{0.2} &	0.2 &	0.051 $\pm$ 0.009 &	\bf 0.049 $\pm$ 0.013 &	\bf 0.05 $\pm$ 0.003 \\
& &	0.4 &	0.039 $\pm$ 0.006 &	\bf 0.02 $\pm$ 0.005 &	0.047 $\pm$ 0.005 \\
& &	0.6 &	0.025 $\pm$ 0.004 &	\bf 0.021 $\pm$ 0.005 &	\bf 0.015 $\pm$ 0.003 \\
& &	0.8 &	\bf 0.012 $\pm$ 0.002 &	0.04 $\pm$ 0.011 &	0.073 $\pm$ 0.013 \\
\cline{2-6}
&\multirow{4}{*}{0.5} &	0.2 &	0.049 $\pm$ 0.008 &	\bf 0.025 $\pm$ 0.006 &	0.041 $\pm$ 0.004 \\
& &	0.4 &	\bf 0.036 $\pm$ 0.005 &	\bf 0.035 $\pm$ 0.004 &	0.038 $\pm$ 0.005 \\
& &	0.6 &	\bf 0.023 $\pm$ 0.005 &	0.037 $\pm$ 0.007 &	\bf 0.016 $\pm$ 0.003 \\
& &	0.8 &	\bf 0.014 $\pm$ 0.003 &	0.05 $\pm$ 0.01 &	0.048 $\pm$ 0.009 \\
\bottomrule
\end{tabular}
\end{center}
\end{table}

\begin{table}[ht!]
\setlength{\heavyrulewidth}{1.5pt}
\begin{center}
\caption{{\bf Estimation errors} for benchmark datasets for $c=0.5$ (part 1). The winning method and the method whose error does not differ from the winning method by more than $0.01$ are bolded.}
\label{tab:errors_uci1}
%\vspace*{0.5 cm}
\begin{tabular}{l|l|l|lll}
\toprule 
Dataset &	$\pi$ &	$\pi'$ &	TCPU  &	DRPU   &	KM2-LS \\
\midrule
\multirow{12}{*}{Diabetes}&\multirow{4}{*}{0.1} &	0.2 &	\bf 0.072 $\pm$ 0.013 &	0.2 $\pm$ 0.0 &	\bf 0.071 $\pm$ 0.023 \\
& &	0.4 &	\bf 0.055 $\pm$ 0.019 &	0.4 $\pm$ 0.0 &	0.149 $\pm$ 0.018 \\
& &	0.6 &	\bf 0.104 $\pm$ 0.014 &	0.6 $\pm$ 0.0 &	0.404 $\pm$ 0.021 \\
& &	0.8 &	\bf 0.157 $\pm$ 0.014 &	0.8 $\pm$ 0.0 &	0.771 $\pm$ 0.015 \\
\cline{2-6}
&\multirow{4}{*}{0.2} &	0.2 &	0.129 $\pm$ 0.011 &	0.239 $\pm$ 0.037 &	\bf 0.103 $\pm$ 0.011 \\
& &	0.4 &	\bf 0.072 $\pm$ 0.013 &	0.417 $\pm$ 0.016 &	0.088 $\pm$ 0.015 \\
& &	0.6 &	\bf 0.044 $\pm$ 0.011 &	0.563 $\pm$ 0.035 &	0.281 $\pm$ 0.017 \\
& &	0.8 &	\bf 0.065 $\pm$ 0.016 &	0.735 $\pm$ 0.061 &	0.627 $\pm$ 0.052 \\
\cline{2-6}
&\multirow{4}{*}{0.5} &	0.2 &	\bf 0.121 $\pm$ 0.027 &	0.576 $\pm$ 0.018 &	0.132 $\pm$ 0.011 \\
& &	0.4 &	\bf 0.058 $\pm$ 0.011 &	0.428 $\pm$ 0.014 &	\bf 0.055 $\pm$ 0.014 \\
& &	0.6 &	\bf 0.043 $\pm$ 0.009 &	0.293 $\pm$ 0.014 &	0.173 $\pm$ 0.019 \\
& &	0.8 &	\bf 0.045 $\pm$ 0.011 &	0.138 $\pm$ 0.012 &	0.645 $\pm$ 0.056 \\
\bottomrule
\multirow{12}{*}{Spambase}&\multirow{4}{*}{0.1} &	0.2 &	\bf 0.021 $\pm$ 0.006 &	0.067 $\pm$ 0.013 &	\bf 0.024 $\pm$ 0.004 \\
& &	0.4 &	\bf 0.02 $\pm$ 0.005 &	0.061 $\pm$ 0.011 &	0.078 $\pm$ 0.015 \\
& &	0.6 &	\bf 0.021 $\pm$ 0.007 &	0.044 $\pm$ 0.008 &	0.299 $\pm$ 0.022 \\
& &	0.8 &	\bf 0.031 $\pm$ 0.008 &	\bf 0.036 $\pm$ 0.01 &	0.592 $\pm$ 0.012 \\
\cline{2-6}
&\multirow{4}{*}{0.2} &	0.2 &	\bf 0.023 $\pm$ 0.005 &	0.046 $\pm$ 0.007 &	\bf 0.02 $\pm$ 0.003 \\
& &	0.4 &	\bf 0.021 $\pm$ 0.004 &	0.05 $\pm$ 0.01 &	0.039 $\pm$ 0.006 \\
& &	0.6 &	\bf 0.014 $\pm$ 0.003 &	0.037 $\pm$ 0.01 &	0.161 $\pm$ 0.024 \\
& &	0.8 &	\bf 0.016 $\pm$ 0.003 &	0.029 $\pm$ 0.003 &	0.469 $\pm$ 0.02 \\
\cline{2-6}
&\multirow{4}{*}{0.5} &	0.2 &	\bf 0.025 $\pm$ 0.01 &	0.057 $\pm$ 0.015 &	\bf 0.022 $\pm$ 0.005 \\
& &	0.4 &	\bf 0.025 $\pm$ 0.005 &	0.059 $\pm$ 0.012 &	\bf 0.02 $\pm$ 0.005 \\
& &	0.6 &	\bf 0.015 $\pm$ 0.004 &	\bf 0.023 $\pm$ 0.005 &	0.102 $\pm$ 0.018 \\
& &	0.8 &	\bf 0.017 $\pm$ 0.004 &	0.046 $\pm$ 0.012 &	0.378 $\pm$ 0.026 \\
\bottomrule
\multirow{12}{*}{Segment} &\multirow{4}{*}{0.1} &	0.2 &	0.019 $\pm$ 0.004 &	\bf 0.007 $\pm$ 0.002 &	0.032 $\pm$ 0.008 \\
& &	0.4 &	\bf 0.022 $\pm$ 0.004 &	\bf 0.019 $\pm$ 0.004 &	0.12 $\pm$ 0.013 \\
& &	0.6 &	\bf 0.021 $\pm$ 0.006 &	\bf 0.029 $\pm$ 0.007 &	0.18 $\pm$ 0.028 \\
& &	0.8 &	\bf 0.022 $\pm$ 0.004 &	0.042 $\pm$ 0.01 &	0.499 $\pm$ 0.025 \\
\cline{2-6}
&\multirow{4}{*}{0.2} &	0.2 &	\bf 0.017 $\pm$ 0.005 &	\bf 0.008 $\pm$ 0.003 &	0.036 $\pm$ 0.006 \\
& &	0.4 &	\bf 0.022 $\pm$ 0.004 &	\bf 0.022 $\pm$ 0.006 &	0.114 $\pm$ 0.013 \\
& &	0.6 &	\bf 0.02 $\pm$ 0.005 &	0.037 $\pm$ 0.013 &	0.182 $\pm$ 0.032 \\
& &	0.8 &	\bf 0.019 $\pm$ 0.003 &	0.047 $\pm$ 0.013 &	0.504 $\pm$ 0.017 \\
\cline{2-6}
&\multirow{4}{*}{0.5} &	0.2 &	0.101 $\pm$ 0.021 &	\bf 0.028 $\pm$ 0.008 &	0.052 $\pm$ 0.008 \\
& &	0.4 &	0.086 $\pm$ 0.021 &	\bf 0.043 $\pm$ 0.012 &	0.121 $\pm$ 0.01 \\
& &	0.6 &	0.052 $\pm$ 0.012 &	\bf 0.039 $\pm$ 0.015 &	0.194 $\pm$ 0.033 \\
& &	0.8 &	\bf 0.026 $\pm$ 0.006 &	0.041 $\pm$ 0.013 &	0.493 $\pm$ 0.024 \\
\bottomrule
\multirow{12}{*}{Waveform} &\multirow{4}{*}{0.1} &	0.2 &	0.156 $\pm$ 0.006 &	\bf 0.105 $\pm$ 0.01 &	0.165 $\pm$ 0.006 \\
& &	0.4 &	0.107 $\pm$ 0.006 &	0.101 $\pm$ 0.018 &	\bf 0.091 $\pm$ 0.008 \\
& &	0.6 &	0.047 $\pm$ 0.005 &	0.08 $\pm$ 0.012 &	\bf 0.026 $\pm$ 0.007 \\
& &	0.8 &	\bf 0.016 $\pm$ 0.004 &	0.039 $\pm$ 0.007 &	0.445 $\pm$ 0.027 \\
\cline{2-6}
&\multirow{4}{*}{0.2} &	0.2 &	\bf 0.165 $\pm$ 0.005 &	0.215 $\pm$ 0.015 &	0.176 $\pm$ 0.006 \\
& &	0.4 &	0.117 $\pm$ 0.007 &	0.156 $\pm$ 0.014 &	\bf 0.103 $\pm$ 0.007 \\
& &	0.6 &	0.063 $\pm$ 0.004 &	0.104 $\pm$ 0.016 &	\bf 0.026 $\pm$ 0.006 \\
& &	0.8 &	\bf 0.017 $\pm$ 0.003 &	0.07 $\pm$ 0.007 &	0.525 $\pm$ 0.092 \\
\cline{2-6}
&\multirow{4}{*}{0.5} &	0.2 &	\bf 0.169 $\pm$ 0.005 &	0.187 $\pm$ 0.02 &	\bf 0.169 $\pm$ 0.008 \\
& &	0.4 &	0.123 $\pm$ 0.006 &	0.156 $\pm$ 0.02 &	\bf 0.107 $\pm$ 0.007 \\
& &	0.6 &	0.07 $\pm$ 0.004 &	0.087 $\pm$ 0.009 &	\bf 0.036 $\pm$ 0.006 \\
& &	0.8 &	\bf 0.017 $\pm$ 0.004 &	0.054 $\pm$ 0.01 &	0.19 $\pm$ 0.008 \\
\bottomrule
\end{tabular}
\end{center}
\end{table}

\begin{table}[ht!]
\setlength{\heavyrulewidth}{1.5pt}
\begin{center}
\caption{{\bf Estimation errors} for benchmark datasets for $c=0.5$ (part 2). The winning method and the method whose error does not differ from the winning method by more than $0.01$ are bolded.}
\label{tab:errors_uci2}
%\vspace*{0.5 cm}
\begin{tabular}{l|l|l|lll}
\toprule 
Dataset &	$\pi$ &	$\pi'$ &	TCPU  &	DRPU    &	KM2-LS  \\
\midrule
\multirow{12}{*}{Yeast} &\multirow{4}{*}{0.1} &	0.2 &	0.087 $\pm$ 0.024 &	0.2 $\pm$ 0.0 &	\bf 0.057 $\pm$ 0.019 \\
& &	0.4 &	\bf 0.043 $\pm$ 0.016 &	0.4 $\pm$ 0.0 &	0.164 $\pm$ 0.015 \\
& &	0.6 &	\bf 0.072 $\pm$ 0.017 &	0.6 $\pm$ 0.0 &	0.423 $\pm$ 0.027 \\
& &	0.8 &	\bf 0.102 $\pm$ 0.016 &	0.8 $\pm$ 0.0 &	0.723 $\pm$ 0.032 \\
\cline{2-6}
&\multirow{4}{*}{0.1} &	0.2 &	0.194 $\pm$ 0.045 &	0.462 $\pm$ 0.023 &	\bf 0.141 $\pm$ 0.029 \\
& &	0.4 &	0.142 $\pm$ 0.028 &	0.37 $\pm$ 0.015 &	\bf 0.1 $\pm$ 0.015 \\
& &	0.6 &	\bf 0.095 $\pm$ 0.019 &	0.216 $\pm$ 0.017 &	0.272 $\pm$ 0.022 \\
& &	0.8 &	\bf 0.057 $\pm$ 0.015 &	0.102 $\pm$ 0.011 &	0.641 $\pm$ 0.047 \\
\cline{2-6}
&\multirow{4}{*}{0.1} &	0.2 &	0.238 $\pm$ 0.035 &	0.43 $\pm$ 0.025 &	\bf 0.164 $\pm$ 0.042 \\
& &	0.4 &	0.161 $\pm$ 0.022 &	0.356 $\pm$ 0.019 &	\bf 0.146 $\pm$ 0.026 \\
& &	0.6 &	\bf 0.117 $\pm$ 0.016 &	0.22 $\pm$ 0.02 &	0.249 $\pm$ 0.018 \\
& &	0.8 &	\bf 0.055 $\pm$ 0.015 &	0.065 $\pm$ 0.015 &	0.543 $\pm$ 0.04 \\
\bottomrule
\multirow{12}{*}{Vehicle} &\multirow{4}{*}{0.1} &	0.2 &	\bf 0.035 $\pm$ 0.006 &	0.2 $\pm$ 0.0 &	0.07 $\pm$ 0.009 \\
& &	0.4 &	\bf 0.055 $\pm$ 0.012 &	0.4 $\pm$ 0.0 &	0.227 $\pm$ 0.014 \\
& &	0.6 &	\bf 0.065 $\pm$ 0.017 &	0.6 $\pm$ 0.0 &	0.496 $\pm$ 0.031 \\
& &	0.8 &	\bf 0.088 $\pm$ 0.021 &	0.8 $\pm$ 0.0 &	0.729 $\pm$ 0.028 \\
\cline{2-6}
&\multirow{4}{*}{0.2} &	0.2 &	\bf 0.028 $\pm$ 0.006 &	0.099 $\pm$ 0.026 &	0.047 $\pm$ 0.006 \\
& &	0.4 &	\bf 0.042 $\pm$ 0.006 &	0.088 $\pm$ 0.024 &	0.163 $\pm$ 0.009 \\
& &	0.6 &	\bf 0.047 $\pm$ 0.007 &	\bf 0.038 $\pm$ 0.011 &	0.344 $\pm$ 0.029 \\
& &	0.8 &	0.051 $\pm$ 0.011 &	\bf 0.032 $\pm$ 0.007 &	0.655 $\pm$ 0.044 \\
\cline{2-6}
&\multirow{4}{*}{0.5} &	0.2 &	0.088 $\pm$ 0.016 &	0.18 $\pm$ 0.029 &	\bf 0.016 $\pm$ 0.006 \\
& &	0.4 &	\bf 0.078 $\pm$ 0.02 &	0.172 $\pm$ 0.027 &	0.174 $\pm$ 0.015 \\
& &	0.6 &	\bf 0.073 $\pm$ 0.015 &	0.091 $\pm$ 0.015 &	0.318 $\pm$ 0.016 \\
& &	0.8 &	0.078 $\pm$ 0.013 &	\bf 0.066 $\pm$ 0.014 &	0.677 $\pm$ 0.045 \\
\bottomrule
\multirow{12}{*}{Banknote} &\multirow{4}{*}{0.1}&	0.2 &	\bf 0.02 $\pm$ 0.003 &	0.2 $\pm$ 0.0 &	0.063 $\pm$ 0.014 \\
& &	0.4 &	\bf 0.02 $\pm$ 0.003 &	0.4 $\pm$ 0.0 &	0.204 $\pm$ 0.025 \\
& &	0.6 &	\bf 0.045 $\pm$ 0.008 &	0.6 $\pm$ 0.0 &	0.372 $\pm$ 0.028 \\
& &	0.8 &	\bf 0.052 $\pm$ 0.006 &	0.8 $\pm$ 0.0 &	0.584 $\pm$ 0.041 \\
\cline{2-6}
&\multirow{4}{*}{0.2} &	0.2 &	\bf 0.021 $\pm$ 0.005 &	\bf 0.022 $\pm$ 0.004 &	0.04 $\pm$ 0.009 \\
& &	0.4 &	\bf 0.019 $\pm$ 0.004 &	\bf 0.027 $\pm$ 0.012 &	0.11 $\pm$ 0.025 \\
& &	0.6 &	\bf 0.03 $\pm$ 0.009 &	\bf 0.024 $\pm$ 0.008 &	0.309 $\pm$ 0.024 \\
& &	0.8 &	\bf 0.023 $\pm$ 0.005 &	\bf 0.021 $\pm$ 0.007 &	0.487 $\pm$ 0.033 \\
\cline{2-6}
&\multirow{4}{*}{0.5} &	0.2 &	0.07 $\pm$ 0.019 &	\bf 0.016 $\pm$ 0.006 &	0.028 $\pm$ 0.006 \\
& &	0.4 &	\bf 0.05 $\pm$ 0.016 &	\bf 0.041 $\pm$ 0.018 &	0.117 $\pm$ 0.026 \\
& &	0.6 &	0.041 $\pm$ 0.011 &	\bf 0.026 $\pm$ 0.006 &	0.16 $\pm$ 0.033 \\
& &	0.8 &	\bf 0.033 $\pm$ 0.009 &	\bf 0.031 $\pm$ 0.01 &	0.38 $\pm$ 0.048 \\
\bottomrule
\end{tabular}
\end{center}
\end{table}

\begin{figure}[ht!]
\centering
    \begin{tabular}{c c}
    \includegraphics[width=0.45\textwidth]{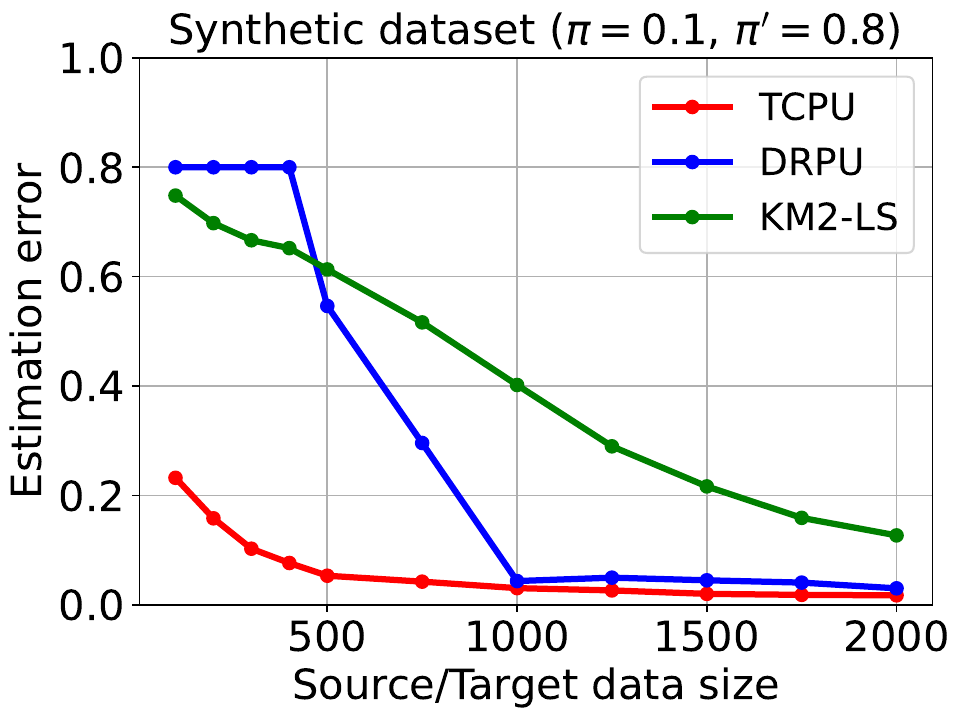}  &
      \includegraphics[width=0.45\textwidth]{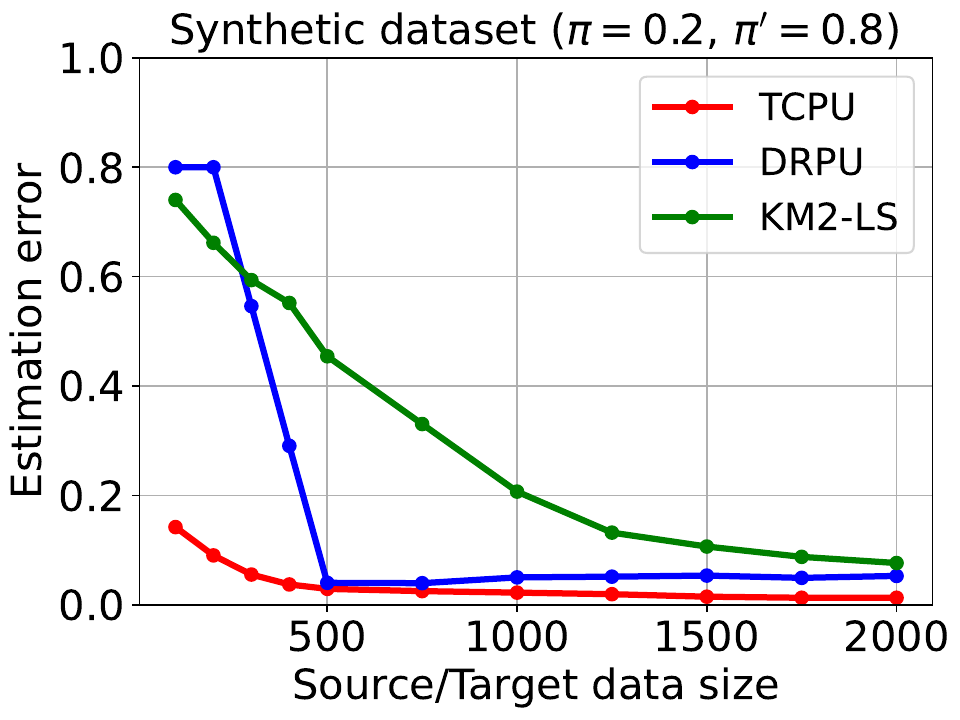}  \\
      \includegraphics[width=0.45\textwidth]{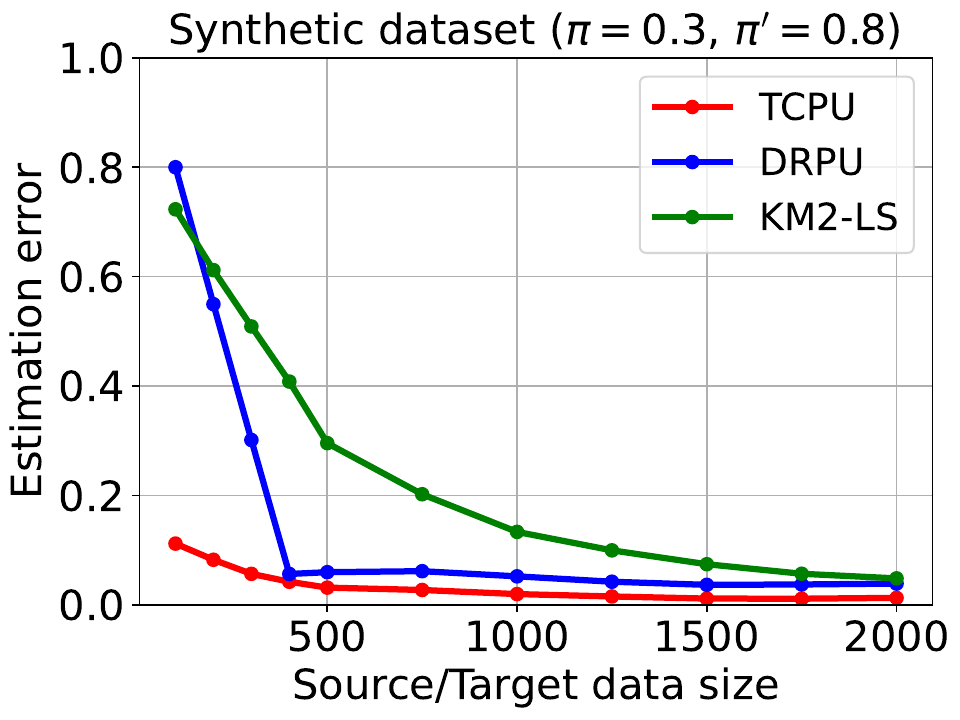}  &
      \includegraphics[width=0.45\textwidth]{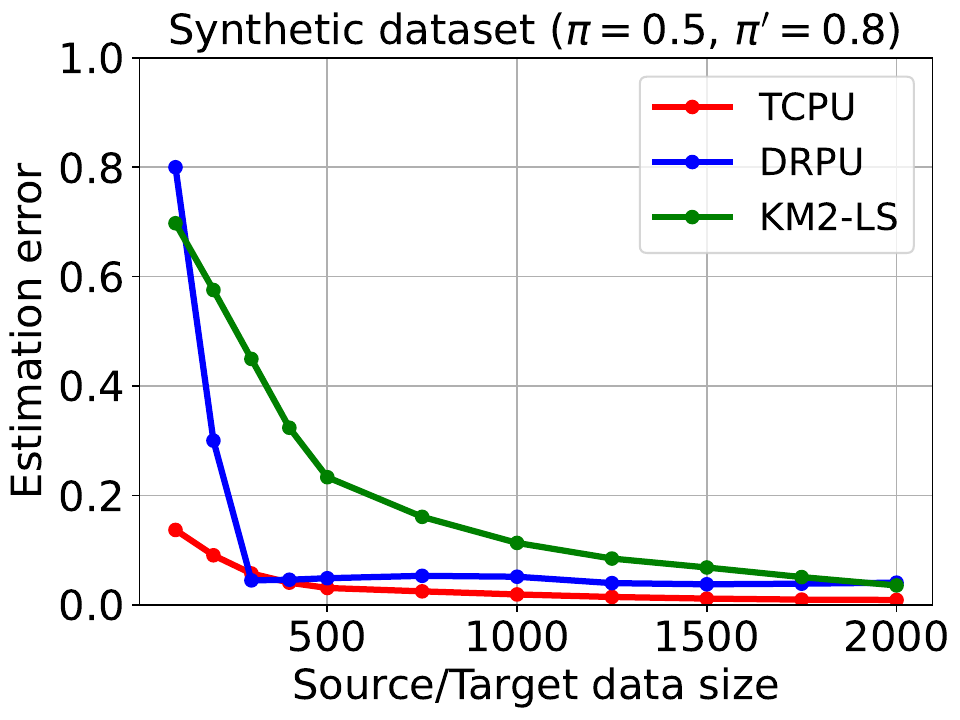}  \\     
      \end{tabular}
    \caption{Estimation errors wrt size of the source data for synthetic dataset and $c=0.5$. We assume that the size of the target data is equal to the size of the source data.}
    \label{lineplot_artificial_n}
\end{figure}

\begin{figure}[ht!]
\centering
    \begin{tabular}{c c}
    \includegraphics[width=0.45\textwidth]{figures1/exp3_error_art1_c0.5_n_sample2000_pi_train0.1_pi_test0.5.pdf}  &
      \includegraphics[width=0.45\textwidth]{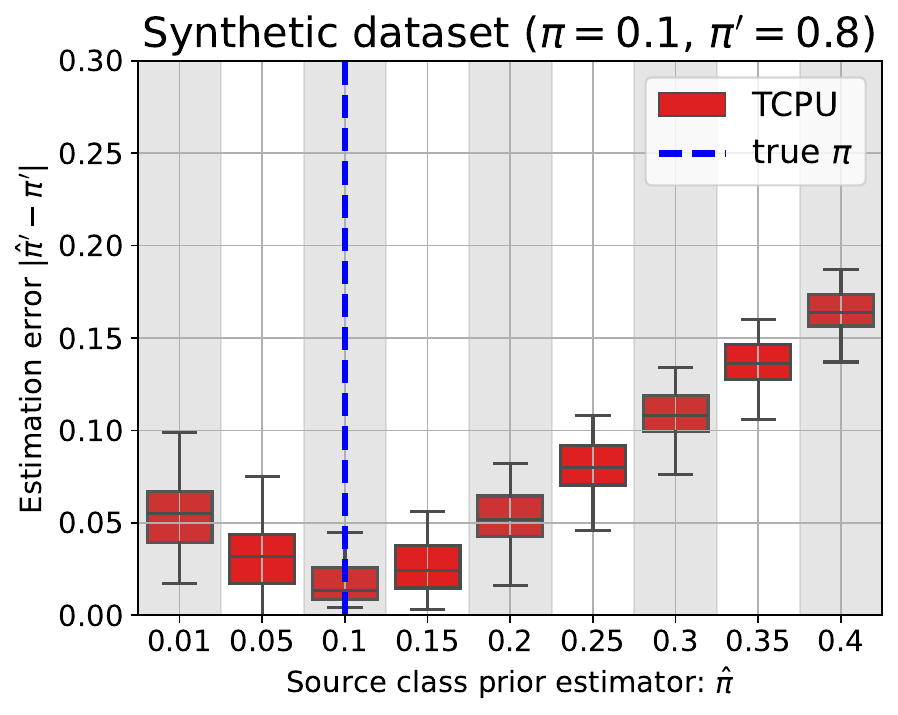}  \\
      \includegraphics[width=0.45\textwidth]{figures1/exp3_error_art1_c0.5_n_sample2000_pi_train0.2_pi_test0.5.pdf}  &
      \includegraphics[width=0.45\textwidth]{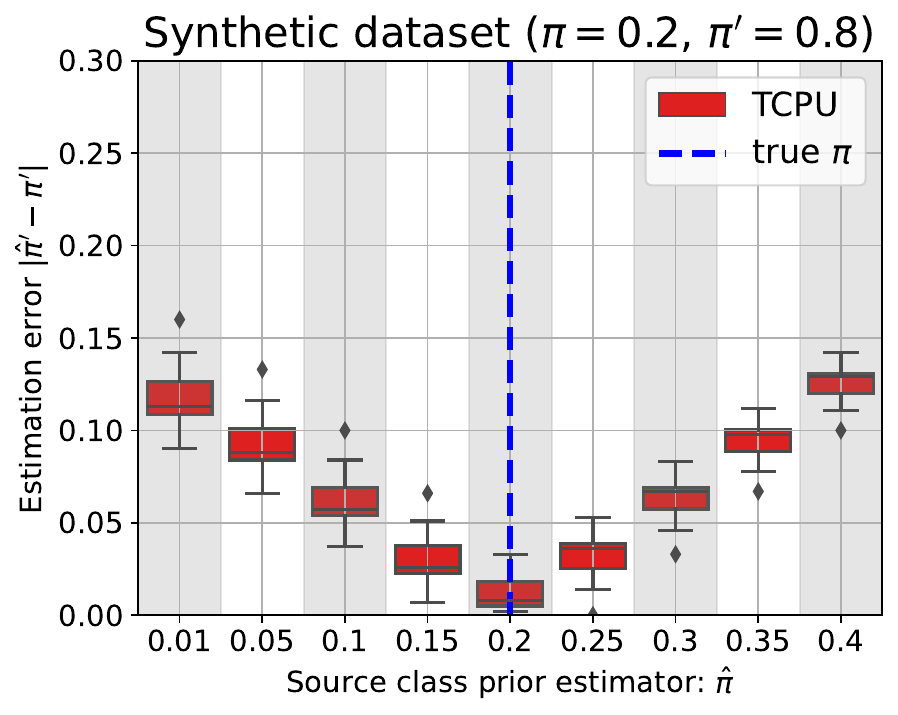}  \\    
    \includegraphics[width=0.45\textwidth]{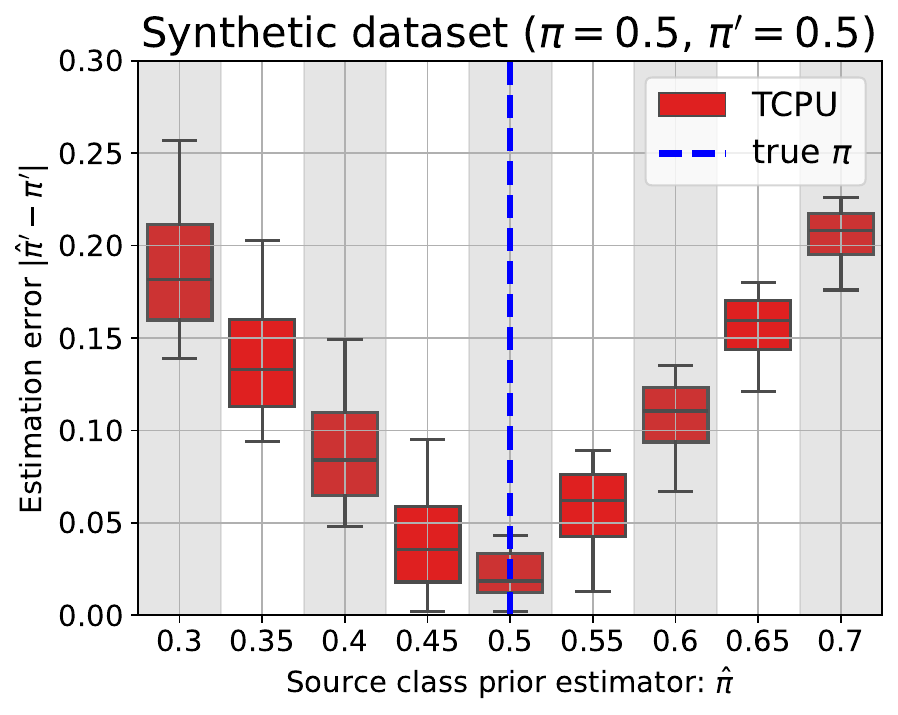}  &
      \includegraphics[width=0.45\textwidth]{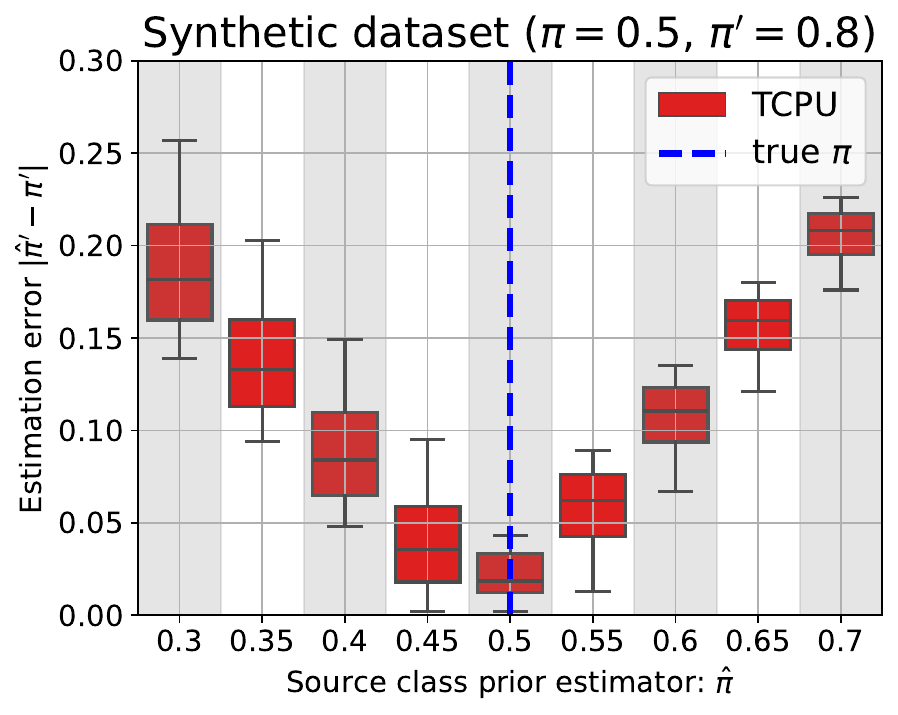}  \\    
 
      \end{tabular}
    \caption{The impact of $\pi$ estimation on the performance of the TCPU estimator. The boxplots show estimation errors for TCPU target class prior estimator $|\hpp-\pi'|$, for different source class prior estimators $\hat{\pi}$.}
    \label{boxplot_artificial_3}
\end{figure}

\begin{figure}[ht!]
\centering
    \begin{tabular}{c c}
    \includegraphics[width=0.45\textwidth]{figures1/exp4_error_art1_c0.5_n_sample2000_pi_train0.1_pi_test0.5.pdf}  &
      \includegraphics[width=0.45\textwidth]{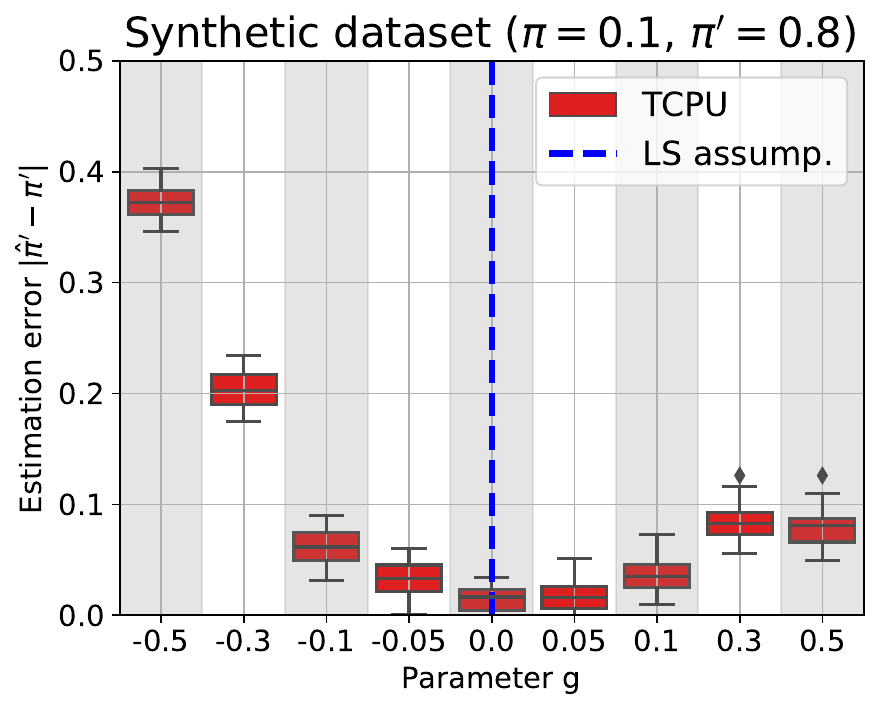}  \\
      \includegraphics[width=0.45\textwidth]{figures1/exp4_error_art1_c0.5_n_sample2000_pi_train0.2_pi_test0.5.pdf}  &
      \includegraphics[width=0.45\textwidth]{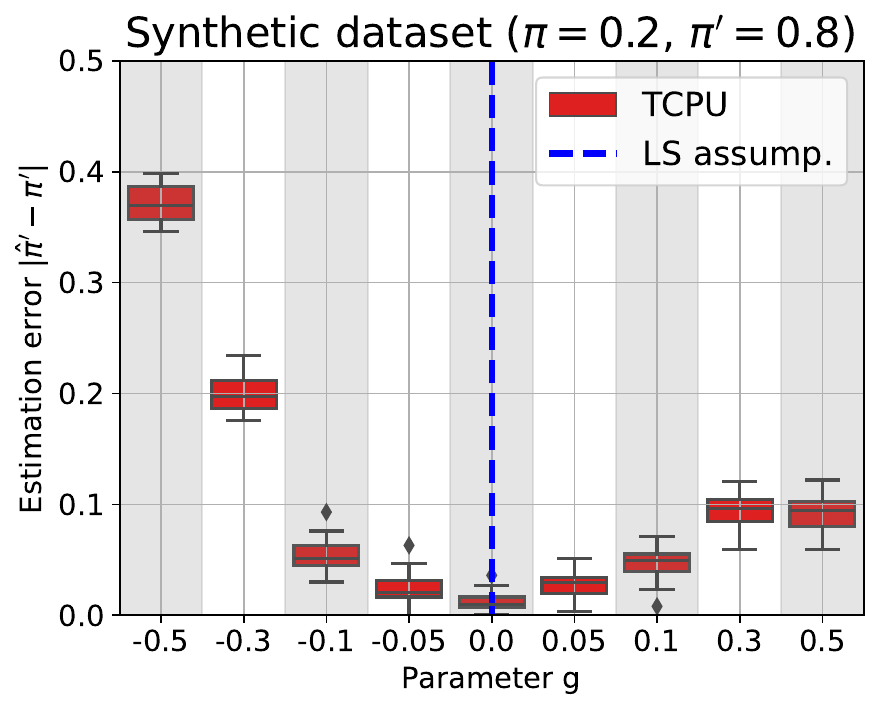}  \\    
    \includegraphics[width=0.45\textwidth]{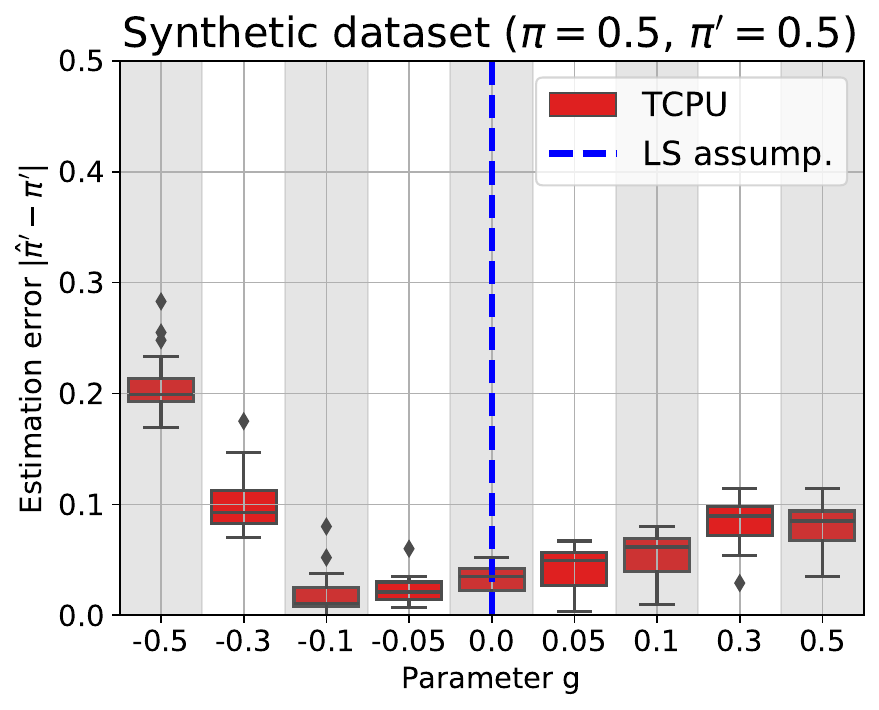}  &
      \includegraphics[width=0.45\textwidth]{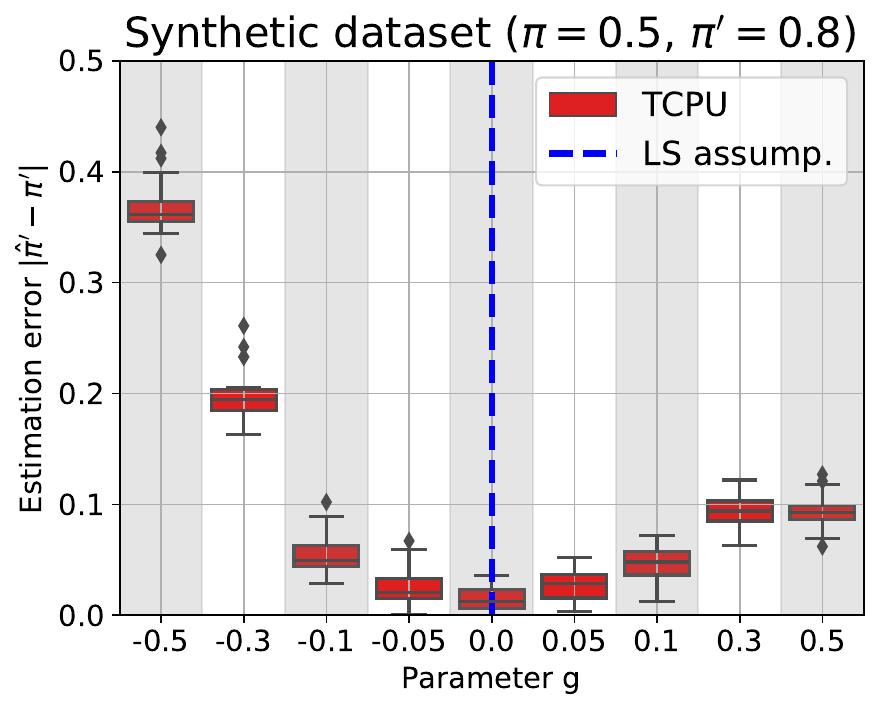}  \\    
      \end{tabular}
    \caption{Robustness to violation of the Label Shift (LS) assumption. The blue vertical line corresponds to the situation when the assumption is met, and non-zero values of the parameter $g$ indicate a violation of the assumption. Boxplots show the distributions of TCPU estimation errors for different values of the parameter $g$.}
    \label{boxplot_artificial_4}
\end{figure}

\end{document}